\documentclass[12pt]{article} 
\usepackage{arxiv,times}
\usepackage[utf8]{inputenc}
\usepackage[T1]{fontenc}
\usepackage{natbib}
\setcitestyle{authoryear,round,citesep={;},aysep={,},yysep={;}}

\usepackage{amsmath,amsfonts,bm}

\usepackage{graphicx} 
\usepackage{amsmath,amssymb,amsthm,mathtools}
\usepackage{booktabs}
\usepackage{microtype}

\newtheorem{theorem}{Theorem}
\newtheorem{proposition}[theorem]{Proposition}
\newtheorem{lemma}[theorem]{Lemma}

\newcommand{\ip}[2]{\left\langle #1,#2\right\rangle_n}
\newcommand{\norm}[1]{\left\lVert #1\right\rVert}
\newcommand{\ddt}[1]{\frac{\mathrm{d}#1}{\mathrm{d}t}}

\def\eqref#1{equation~(\ref{#1})}
\def\Eqref#1{Equation~(\ref{#1})}

\def\1{\bm{1}}

\DeclareMathAlphabet{\mathsfit}{\encodingdefault}{\sfdefault}{m}{sl}
\SetMathAlphabet{\mathsfit}{bold}{\encodingdefault}{\sfdefault}{bx}{n}

\newcommand{\R}{\mathbb{R}}

\usepackage{hyperref}
\usepackage{url}
\usepackage{amsmath,amssymb,amsthm,mathtools,bm}
\usepackage{graphicx,booktabs,microtype}
\usepackage{xcolor}
\usepackage[nameinlink,capitalize,noabbrev]{cleveref}
\theoremstyle{definition}
\newcommand{\relu}{\operatorname{ReLU}}
\newcommand{\sgn}{\operatorname{sgn}}
\newcommand{\op}{\mathrm{op}}
\newcommand{\qs}{\mathrm{qs}}
\newcommand{\Acal}{\mathcal A}
\newcommand{\VA}{\mathcal V_A}
\newcommand{\epsm}{\varepsilon_m}

\graphicspath{{figures/}}

\title{%
  \fontsize{17pt}{20pt}\selectfont
  Weight Decay and Neuron Condensation: \\
  A Three-Stage Analysis of Two-Layer ReLU Networks
}

\author{
  \parbox{0.95\textwidth}{
    \centering
    \normalfont
    \textbf{Cheng Xu}$^{1,\dagger}$,\quad
    \textbf{Pengxiao Lin}$^{1,\dagger}$,\quad
    \textbf{Zhangchen Zhou}$^{1,\dagger}$,\quad
    \textbf{Zhi-Qin John Xu}$^{1,2,*}$
    \\[0.8em]
    $^{1}$School of Mathematical Sciences,
    Shanghai Jiao Tong University\\[0.3em]
    $^{2}$Institute of Natural Sciences, MOE-LSC,
    Shanghai Jiao Tong University\\[0.8em]
    \small{
    $^{\dagger}$These authors contributed equally to this work.\\[0.2em]
    $^{*}$Corresponding author: \texttt{xuzhiqin@sjtu.edu.cn}}
  }
}

\date{}
\renewcommand{\undertitle}{}
\renewcommand{\shorttitle}{Weight Decay and Neuron Condensation}

\begin{document}

\maketitle

\begin{abstract}
Weight decay is widely used as a regularization technique in neural network training, yet its role in neuron condensation (parameter direction alignment) remains unclear. Starting from a parameter initialization in the neural tangent kernel regime, we characterize training dynamics under weight decay through three stages: rapid fitting, amplitude compression, and neuron condensation. Using a two-layer ReLU network, we analyze a residual correlation field that governs both neuron amplitudes and directions.
During rapid fitting, the residual approaches a quasi-static equilibrium maintained by weight decay while the tangent kernel remains nearly unchanged.
In the early stage of amplitude compression, kernel decay amplifies the residual correlation field, whose isolated attracting extrema provide candidates for condensation directions.
As neuron amplitudes stabilize, we bound the drift of attracting extrema and demonstrate contraction of neuron directions around them, leading to neuron condensation.
This staged analysis provides a dynamical understanding of how
weight decay promotes a condensed representation, beyond reducing parameter norms.
\end{abstract}

\keywords{ Two Layer Network, Weight Decay, Neuron Condensation, Training Dynamics}

\section{Introduction}
\label{sec:intro}

An overparameterized network fits the same training data in many ways,
and which solution gradient-based training finds depends on the
parameterization, the initialization and the optimizer
\citep{luo2021phase,chizat2019lazy,chizat2020implicit}.
Classical capacity control does not explain why such networks
generalize \citep{zhang2017rethinking}; one explanation is that
training returns solutions simpler than the size of the network
suggests \citep{xu2025overview,xu2024overview}.  For a neural network this
simplicity has a concrete form: the learned function may need far fewer
neurons than the network has.

Condensation is how this happens during training: the
input weight vectors of many neurons, biases included, concentrate
around a few directions \citep{luo2021phase,zhou2022condensation,xu2025overview}.
Neurons sharing a direction compute proportional features and act as
one, so a wide network behaves like a much narrower one.

Small initialization induces condensation
\citep{maennel2018quantize,zhou2022condensation}.
The price is speed: the smaller the initialization, the longer training
takes to leave the neighborhood of the origin before fitting begins
\citep{zhang2021embedding,zhang2022embedding}.  Dropout  \citep{srivastava2014dropout} induces condensation as well even with large initialization
\citep{zhang2024dropout}.

A large initialization places a sufficiently wide network in the lazy
regime instead: training fits the data quickly, but the features move
little relative to their initial scale, so no condensation occurs
\citep{jacot2018ntk,chizat2019lazy,luo2021phase}.  We study whether
weight decay induces condensation even with large initialization, in two-layer
ReLU regression.  Sparse minimizers are known to exist for related
norm-regularized formulations \citep{savarese2019infinite,parhi2021banach},
but their existence says nothing about whether the gradient-based learning can lead the network to such solutions.  
In related homogeneous models, training with large initialization and weak weight decay can transition from kernel behavior to a parameter-space maximum-margin bias\citep{lyu2024dichotomy}.  This transition provides one explanation of
grokking, in which test performance improves long after the training
loss has stopped decreasing
\citep{power2022grokking,liu2023omnigrok,kumar2024grokking,lyu2024dichotomy}.
Consistent with this separation of time scales, our experiments
show rapid initial fitting governed by the neural tangent kernel,
followed by slower representation dynamics under weight decay.
During this later evolution, neuron amplitudes shrink and surviving
directions concentrate into a small number of groups.
We characterize this condensation mechanism through three stages:
residual equilibration, amplification of the residual correlation
field, and neuron condensation as amplitudes stabilize. 

The key in this paper is not the proof of theorems but the connection between the weight decay and the weight directions, with reasonable assumptions motivated by the experiments and dynamical analysis. We complement the two-layer analysis with empirical observations in Transformers. Figure~\ref{fig:transformer_condensation} shows condensation in the query and key projection weights of a Transformer model trained on an anchor function task \citep{zhang2024initialization,zhang2026}. Each input contains an ordered pair of anchor tokens whose composition determines the target. All examples generated from the anchor pairs $(3,4)$ and $(4,3)$ are excluded from training and used only for evaluation. These observations extend the empirical evidence beyond two-layer networks.

\begin{figure}[!htbp]
  \centering
  \includegraphics[width=\linewidth]{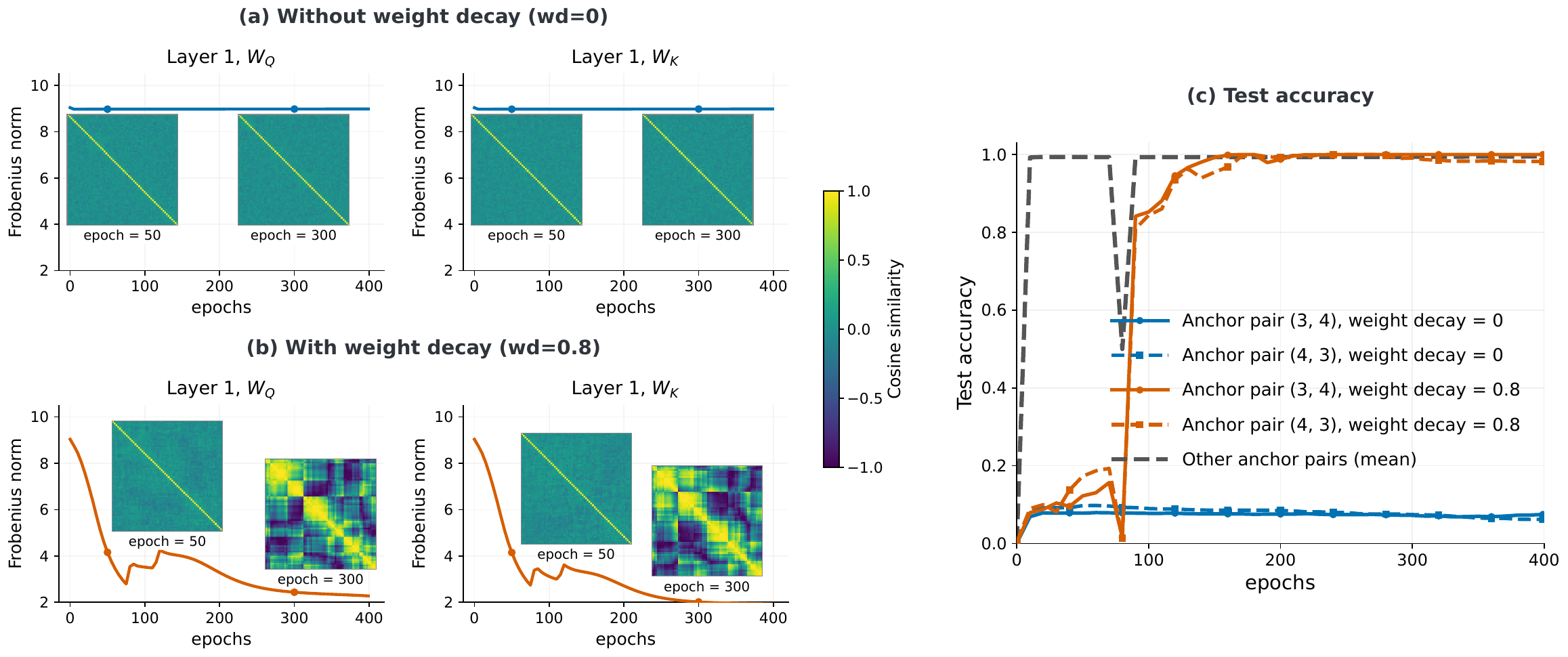}
  \caption{Parameter norm contraction, condensation and generalization in a Transformer.
    Panels (a) and (b) show the first layer $W_Q$ and $W_K$ weights for $\lambda=0$ and $\lambda=0.8$, respectively. Curves show their Frobenius norms, and insets show row-wise cosine similarities at epochs 50 and 300. Weight decay contracts the norms and induces row vectors alignment. Panel (c) shows that $\lambda=0.8$ achieves substantially higher generalization accuracy on the held-out anchor pairs $(3,4)$ and $(4,3)$ than $\lambda=0$. Experimental details are provided in Appendix~\ref{app:transformer}}
  \label{fig:transformer_condensation}
\end{figure}

\paragraph{Main contributions.}
We explain how weight decay induces neuron condensation in two-layer
ReLU networks under large initialization by characterizing three
stages of training: rapid fitting, amplitude compression, and
neuron condensation.
We show that the angular dynamics induced by the field admit only finitely many isolated attracting directions.
During early amplitude compression, weight decay reduces the
neural tangent kernel and amplifies this field.
As amplitudes stabilize, we characterize the slow motion of
attracting extrema and exponential angular contraction around them.
Together, these results provide a dynamical explanation of how
weight decay promotes neuron condensation under large initialization,
distinct from previous analyses of condensation under small
initialization and lazy training under large initialization.

\section{Related Work}
\label{sec:related-work}
\paragraph{Condensation and early directional alignment.}
Small-initialization training can concentrate neurons along a limited
set of directions, as studied through feature quantization
\citep{maennel2018quantize}, initialization-dependent regimes
\citep{luo2021phase}, and initial condensation
\citep{zhou2022condensation,chen2024initial}.
For shallow ReLU networks, \citet{boursier2022gradient} characterize
alignment and subsequent training under orthogonal inputs while \citet{boursier2025alignment} give a quantitative analysis
of early alignment for more general data and investigate its sparsity
bias phenomenon. \citet{kumar2024directional} relate directional convergence near small
initializations and certain saddles to stationary directions of a
neural correlation function.

\paragraph{Weight decay and the transition beyond lazy training.}
The neural tangent kernel and lazy-training analyses describe regimes
in which training is approximated by a linearized model
\citep{jacot2018ntk,chizat2019lazy}.
With $L_2$ regularization, \citet{lewkowycz2020training} derive
time-dependent kernel dynamics for homogeneous networks in an
infinite-width regime and analyze the increase of the
empirical loss.  \citet{lyu2024dichotomy} prove a sharp transition from kernel to a parameter space maximum-margin bias
for large initialization and vanishing weight decay; their general theory assumes smooth homogeneous models.
\citet{kumar2024grokking} also connect delayed generalization to
late feature learning, including examples without regularization. Focusing on convex neural network \citep{bach2017breaking}, 
\citet{pilanci2020convex} establish convex reformulations of
two-layer ReLU training with a block-sparsity penalty and $L^2$ regularization.

\paragraph{Other mechanisms leading to condensation.}
Dropout provides another route to neuron condensation
\citep{zhang2024dropout}. Empirical work on Transformers also reports
condensation under changes in initialization and weight decay
\citep{zhang2024initialization}, while \citet{chen2025rankcollapse}
analyze staged condensation and rank collapse from small
initialization in a different architecture.
Weight decay has additionally been linked to neural collapse in
classification \citep{jacot2025collapse} and low-rank structure at
stationary points \citep{kuzborskij2026lowrank}.
These geometric properties differ from the within-group angular
diameter studied here: concentration of sample representations or
matrix rank alone does not characterize alignment of individual
hidden-neuron directions.

\section{Preliminaries}
\label{sec:setup}

\subsection{Basic notation}
We write $\norm{\cdot}_2$ for the Euclidean norm and
$\norm{\cdot}_{\op}$ for its induced matrix norm. For $\boldsymbol{x},\boldsymbol{y}\in\R^n$,
the empirical inner product and norm are
\[
 \ip {\boldsymbol{x}}{\boldsymbol{y}}=\frac1n \boldsymbol{x}^\top \boldsymbol{y},\qquad
 \norm {\boldsymbol{x}}_n=\sqrt{\ip {\boldsymbol{x}}{\boldsymbol{x}}}=\frac{\norm{\boldsymbol{x}}_2}{\sqrt n}.
\]
The notation $O_p(c_m)$ and $o_p(c_m)$ means that division by $c_m>0$
gives, respectively, a quantity bounded or converging
to zero in probability as $m\to\infty$.
\subsection{Network and training objective}
Given samples $\{(x_i,y_i)\}_{i=1}^n\subset\R\times\R$, consider a
two-layer ReLU network with $m$ hidden neurons:
\begin{equation}
 f_\Theta(x)=\sum_{k=1}^m a_k\relu(w_kx+b_k),
 \qquad \Theta=(a_k,w_k,b_k)_{k=1}^m.
 \label{eq:model}
\end{equation}
For $\lambda>0$, the training objective with weight decay is
\begin{equation}
 \mathcal L(\Theta)=\frac1n\sum_{i=1}^n(f_\Theta(x_i)-y_i)^2,
 \qquad \Phi(\Theta)=\mathcal L(\Theta)
 +\frac\lambda2\sum_{k=1}^m(a_k^2+w_k^2+b_k^2).
 \label{eq:objective}
\end{equation}
Write $\bm{X}=(x_i)_{i=1}^n$, $\boldsymbol{y}=(y_i)_{i=1}^n$, and
${\bm{e}}(t)=f_{\Theta(t)}(\bm{X})-\boldsymbol{y} \in \mathbb{R}^n$ for the training residual.
The parameters $a_k(0),w_k(0),b_k(0)$ are initialized independently from
$N(0,m^{-2\gamma})$, with large initialization $\gamma<1/2$.
We study gradient flow $\ddt{\Theta(t)}=-\nabla_\Theta\Phi(\Theta(t))$,
with weight decay applied to all parameters. Our main experiment uses the target function $f_\star(x)=0.2\relu(x-1/3)+0.2\relu(-x-1/3)$ as shown in Figure~\ref{fig:overview}(b). Detailed experimental settings are provided in Appendix~\ref{app:controls}.

\subsection{Amplitude and angular dynamics}
Write ${\bm{v}}_k=(w_k,b_k)^\top=r_k{\bm{u}}(\theta_k)$, where
$r_k=\norm{{\bm{v}}_k}_2$ and ${\bm{u}}(\theta)=(\cos\theta,\sin\theta)^\top$.
For $r_k>0$, define the direction $\theta_k\in [0,2\pi)$,
the amplitude $A_k=|a_k|r_k$, and $s_k=\sgn(a_k)$.
Positive and negative neurons have $a_k>0$ and $a_k<0$ respectively.
With the directional feature
\begin{equation}
 \phi_\theta(x)=\relu(x\cos\theta+\sin\theta),
 \qquad
 \psi_\theta(x)=\mathbf1_{\{x\cos\theta+\sin\theta>0\}}
                   (-x\sin\theta+\cos\theta),
 \label{eq:directionalfeatures}
\end{equation}
the network is $f_\Theta(x)=\sum_k s_kA_k\phi_{\theta_k}(x)$ by positive homogeneity of $\relu$ activation.
Here $\psi_\theta$ is the angular derivative of $\phi_\theta$ away from activation
boundaries, where $x_i\cos\theta+\sin\theta=0$ for some training input.

For a residual vector ${\bm{e}}\in\R^n$, define the \emph{residual correlation
field} as the scalar function $q_e$ of the angle $\theta$, with
angular derivative $p_e$:
\begin{equation}
 q_e(\theta)=\ip {\bm{e}}{{\bm{\phi}}_\theta({\bm{X}})},\qquad
 p_e(\theta)=\partial_\theta q_e(\theta)=\ip {\bm{e}}{{\bm{\psi}}_\theta({\bm{X}})}.
 \label{eq:qp}
\end{equation} 
During training, write $q_t=q_{e(t)}$,
$p_t=p_{e(t)}$, $q_k=q_t(\theta_k)$, and $p_k=p_t(\theta_k)$.
Let $J_t=\partial_\Theta f_{\Theta(t)}({\bm{X}})$ and $K_t=J_tJ_t^\top/n$
be the Jacobian and empirical neural tangent kernel (NTK).
We have the following gradient flow.

\begin{proposition}[Gradient flow]
\label{prop:flow}
For $r_k>0$, gradient flow for \eqref{eq:objective} satisfies
\begin{equation}
 \ddt{a_k}=-2r_kq_k-\lambda a_k,\qquad
 \ddt{r_k}=-2a_kq_k-\lambda r_k,\qquad
 \ddt{\theta_k}=-2\frac{a_k}{r_k}p_k.
 \label{eq:radial}
\end{equation}
The training residual satisfies
\begin{equation}
 \ddt {\bm{e}}=-2(K_t+\lambda I_n){\bm{e}}-2\lambda {\bm{y}}.
 \label{eq:residual}
\end{equation}
Moreover, the scale imbalance obeys
\begin{equation}
 B_k(t):=a_k(t)^2-r_k(t)^2={\mathrm{e}}^{-2\lambda t}B_k(0).
 \label{eq:balance}
\end{equation}
\end{proposition}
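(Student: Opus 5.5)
We compute the three claims directly from the chain rule, working in the open set where $r_k>0$ and no training input sits on an activation boundary of neuron $k$. There $\relu(w_kx_i+b_k)$ is differentiable in $(w_k,b_k)$. Write $\sigma_{ik}=\mathbf 1_{\{w_kx_i+b_k>0\}}$.

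\textbf{Step 1: partial derivatives of $\Phi$.} Since $\mathcal L=\norm{\bm e}_n^2$, we get
\[
\partial_{a_k}\mathcal L=\frac2n\sum_i e_i\relu(w_kx_i+b_k)=2r_k\,q_k,
\]
using $\relu(w_kx+b_k)=r_k\phi_{\theta_k}(x)$. Similarly,
\[
\nabla_{\bm v_k}\mathcal L=2a_k\,\ip{\bm e}{\sigma_{\cdot k}(x_\cdot,1)^\top}.
\]
Adding the weight-decay terms gives $\dot a_k=-2r_kq_k-\lambda a_k$ and $\dot{\bm v}_k=-2a_k\bm g_k-\lambda\bm v_k$, where
\[
\bm g_k=\ip{\bm e}{\sigma_{\cdot k}(x_\cdot,1)^\top}\in\R^2 .
\]

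\textbf{Step 2: polar decomposition.} We use $\dot r_k=\bm u(\theta_k)^\top\dot{\bm v}_k$ and $r_k\dot\theta_k=\bm u^\perp(\theta_k)^\top\dot{\bm v}_k$, with $\bm u^\perp=(-\sin\theta,\cos\theta)^\top$.

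For the radial part, $\sigma_{ik}$ depends only on $\theta_k$, and
\[
\bm u^\top(x_i,1)^\top\sigma_{ik}=\relu(x_i\cos\theta_k+\sin\theta_k)=\phi_{\theta_k}(x_i).
\]
Hence $\bm u^\top\bm g_k=q_k$, and $\bm u^\top\bm v_k=r_k$. This gives $\dot r_k=-2a_kq_k-\lambda r_k$.

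For the angular part, $(\bm u^\perp)^\top(x_i,1)^\top\sigma_{ik}=\psi_{\theta_k}(x_i)$, so $(\bm u^\perp)^\top\bm g_k=p_k$. The weight-decay contribution drops out because $(\bm u^\perp)^\top\bm v_k=0$. Dividing by $r_k$ gives $\dot\theta_k=-2(a_k/r_k)p_k$.

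\textbf{Step 3: balance law.} Differentiating $B_k=a_k^2-r_k^2$ and inserting Step 2,
\[
\dot B_k=2a_k(-2r_kq_k-\lambda a_k)-2r_k(-2a_kq_k-\lambda r_k)=-2\lambda B_k .
\]
The cross terms cancel, and integrating gives $B_k(t)=\mathrm e^{-2\lambda t}B_k(0)$. For the integration we need the identity along the whole trajectory, not just where $r_k>0$. It holds everywhere because $\tfrac{d}{dt}\norm{\bm v_k}_2^2=2\bm v_k^\top\dot{\bm v}_k$ in Cartesian coordinates. By Euler's identity for the $1$-homogeneous $\relu$, $\bm v_k^\top\nabla_{\bm v_k}\mathcal L=a_k\partial_{a_k}\mathcal L$, so no polar coordinates are needed for this step.

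\textbf{Step 4: residual equation.} Since $\bm y$ is fixed, $\dot{\bm e}=J_t\dot\Theta$, and
\[
\dot\Theta=-\nabla\mathcal L-\lambda\Theta=-\tfrac2nJ_t^\top\bm e-\lambda\Theta .
\]
The first term gives $-2K_t\bm e$. For the second, Euler's theorem applied to the $2$-homogeneous map $\Theta\mapsto f_\Theta(\bm X)$ gives $J_t\Theta=2f_\Theta(\bm X)$. Hence
\[
-\lambda J_t\Theta=-2\lambda(\bm e+\bm y),
\]
which yields $\dot{\bm e}=-2(K_t+\lambda I_n)\bm e-2\lambda\bm y$.

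\textbf{Main obstacle.} The only delicate point is nondifferentiability on activation boundaries, where some $x_i\cos\theta_k+\sin\theta_k=0$. There $\relu'$ is undefined and $\psi$ is only a one-sided derivative. We adopt the convention $\relu'(0)=0$, i.e., the indicator in $\psi_\theta$, which is consistent with the paper's definition. With this convention the Euler identities above still hold exactly, since boundary terms multiply $w_kx_i+b_k=0$. So the balance law and the residual equation hold along any (Clarke/a.e.) trajectory. Only the angular equation needs the caveat "away from boundaries", which matches the hypothesis $r_k>0$ and the remark following \eqref{eq:directionalfeatures}.
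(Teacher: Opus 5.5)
Your proposal is correct and follows the paper's derivation essentially step for step. You compute the Cartesian gradients $2r_kq_k$ and $2a_k\bm M_k$, project onto $\bm u(\theta_k)$ and $\bm u_\perp(\theta_k)$, cancel the cross terms in $\dot B_k$, and use $J_t\Theta=2f_\Theta(\bm X)$ for the residual. Your Cartesian Euler-identity argument extending the balance law through $r_k=0$ is a small refinement; the paper's $2r_k\dot r_k$ computation is the same identity $2\bm v_k^\top\dot{\bm v}_k$. At boundary contacts, the paper spells out what your ``(Clarke/a.e.)'' leaves implicit: $\dot{\bm e}=J_t\dot\Theta$ holds a.e. because preactivation derivatives vanish a.e. on their zero sets, with the same selection used in the flow and in $J_t$.
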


Weight decay acts directly on parameter scales, while angular motion
is governed by $p_k$.
\Eqref{eq:balance} shows that the imbalance $a_k^2-r_k^2$
decays exponentially, motivating a balanced description of late training.
For an initially strictly inactive neuron, $w_kx_i+b_k<0$ for every
training input, so $q_k=p_k=0$: its amplitude decays as
$A_k(t)={\mathrm{e}}^{-2\lambda t}A_k(0)$ while its direction remains fixed.
Proposition ~\ref{prop:flow} is derived in
Appendix~\ref{app:dynamics}.

Holding $K_t$ fixed in \eqref{eq:residual} gives the
\emph{quasi-static residual}
\begin{equation}
 {\bm{e}}_\qs(t):=-\lambda(K_t+\lambda I_n)^{-1}{\bm{y}}.
 \label{eq:equilibrium}
\end{equation}
It is an instantaneous reference for the actual residual ${\bm{e}}(t)$.

\subsection{Neuron condensation}
For a fixed group $G$ of at least two neurons with $A_k(t)>0$, define its angular diameter by
\begin{equation}
 \operatorname{diam}_{\theta}G(t)
 :=\max_{i,j\in G}|\theta_i(t)-\theta_j(t)|.
 \label{eq:angulardiameter}
\end{equation}
We say that $G$ undergoes \emph{neuron condensation} if
$\operatorname{diam}_{\theta}G(t)\to0$ as $t\to\infty$.
Thus condensation means that distinct neurons approach a common
direction. Different groups may condense
around different directions.

\subsection{Three-stage overview}

\begin{figure}[!htbp]
 \centering\includegraphics[width=\linewidth]{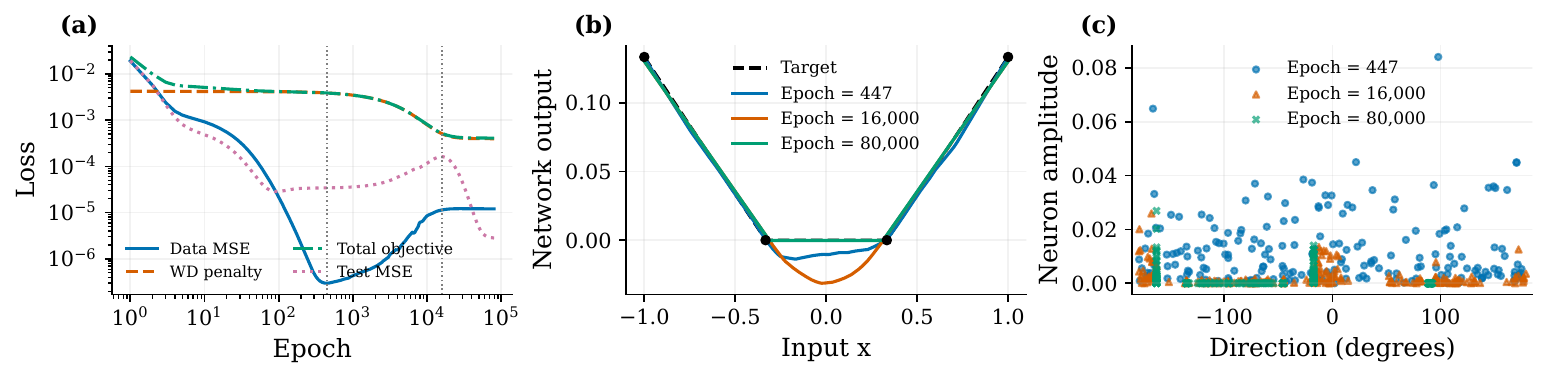}
 \caption{Training overview. (a) Data MSE, weight decay penalty, total
 objective, and test MSE; vertical markers indicate the loss minimum
 (epoch 447) and an epoch near the amplitude plateau (epoch 16,000).
 (b) Learned functions at epochs 447, 16,000, and 80,000. The four black dots are training samples. (c) Directions and amplitudes of all 200 neurons
 at different epochs.}
 \label{fig:overview}
\end{figure}

Figure~\ref{fig:overview} illustrates the evolution from rapid
data fitting to neuron condensation.
The delayed decrease in test loss in panel (a) reveals a
grokking phenomenon.
Panel (b) shows the learned function gradually approaching
the target as the network evolves toward a condensed solution that closely approximates the target function.
The neuron feature snapshots in panel (c) show a complex representation with widely scattered features (blue dots) after rapid fitting. As amplitudes stabilize, the features develop a clearer directional structure (orange dots), and eventually form tight angular groups (green dots) corresponding to the condensed solution.

Figure~\ref{fig:three_stages} resolves this evolution into three stages:
\begin{enumerate}
 \item \textbf{Stage I: Fitting.}
 The data loss decreases rapidly (a), and the residual approaches
 its quasi-static reference (b), while neuron directions remain
 broadly distributed (f).

 \item \textbf{Stage II: Compression.}
 Neuron amplitudes continue to decrease until they stabilize (c), while the attracting minima of the residual correlation field deepen substantially (d),
 strengthening directional attraction. 
 Neuron directions begin to narrow but remain dispersed at the
 end of this stage (f).

 \item \textbf{Stage III: Condensation.}
 After amplitudes have nearly stabilized (c),
 the angular diameters of the two neuron groups continue to decrease
 approximately exponentially (e), with neurons condensing to specific directions (f).
\end{enumerate}

Section~\ref{sec:theory} characterizes the residual left after fitting, the attracting directions and amplification of the residual correlation field, and how amplitude stabilization controls extrema drift and enables angular contraction.

\begin{figure}[!htbp]
 \centering\includegraphics[width=\linewidth]{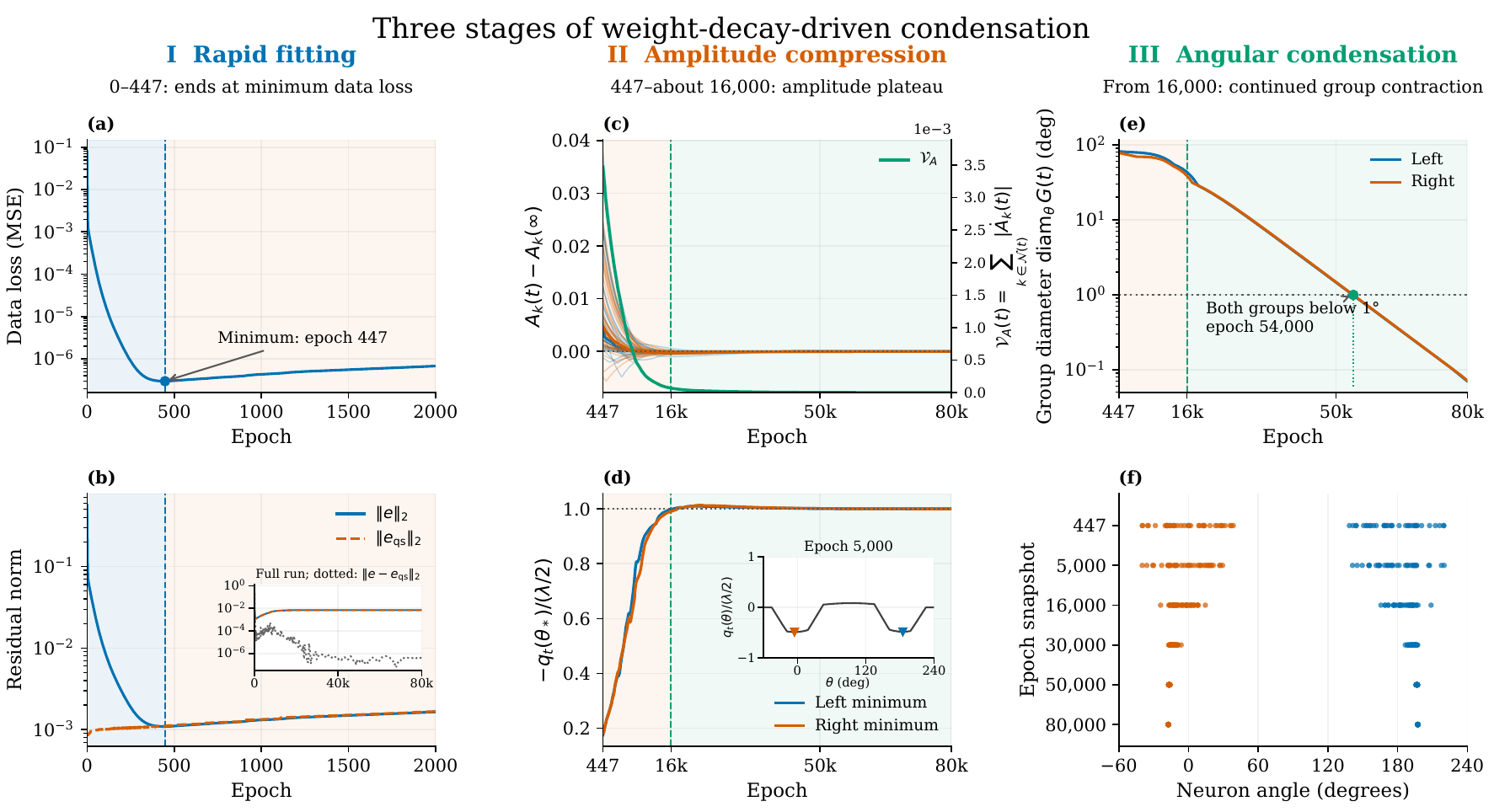}
 \caption{Diagnostics of the three stages.
(a,b) Data MSE and residual norms $\|{\bm{e}}\|_2$, $\|{\bm{e}}_{\qs}\|_2$;
the inset in (b) shows the full run, including
$\|{\bm{e}}-{\bm{e}}_{\qs}\|_2$ (dotted).
(c) Individual amplitude differences $A_k(t)-A_k(\infty)$
(left axis), with $A_k(\infty)$ approximated by the value at
epoch 80,000, and total amplitude variation
$\mathcal V_A(t)=\sum_{k\in\mathcal N(t)}|\dot A_k(t)|$
over all non-dead neurons (green curve, right axis).
(d) Normalized depths $-q_t(\theta_*(t))/(\lambda/2)$ of the
smooth negative minima associated with the two neuron groups.
The inset shows $q_t(\theta)/(\lambda/2)$ at epoch 5,000. The normalized depth plateau in Stage III can be explained by the balanced amplitude dynamics in Appendix~\ref{app:balance}.
(e,f) Angular diameters and angle snapshots of two neuron groups.
}
 \label{fig:three_stages}
\end{figure}

\section{Main results}
\label{sec:theory}
We analyze how the residual correlation field governs neuron dynamics across the three training stages. We first identify the residual left after fast fitting, then
establish the discreteness of the field's attracting extrema
and characterize how amplitude compression amplifies the field. The main condensation theorem
connects small amplitude variation to exponential contraction of a
group's angular diameter.

\subsection{Stage I: Fast residual equilibration}
\label{sec:stage1}
Under the initialization in Section~\ref{sec:setup}, let
$\rho_m=m^{1-2\gamma}$ and $\epsm=\lambda/\rho_m$.
The initial kernel satisfies $K_0/\rho_m\to\bar K\succ0$
in probability as $m\to\infty$; see
Appendix~\ref{app:initialkernel} for the proof. The initial kernel has eigenvalues of order $\rho_m=m^{1-2\gamma}$,
which diverges to infinity since $\gamma<1/2$.
The following theorem shows that the residual rapidly approaches
its quasi-static reference while the kernel remains close
to its initial value.

\begin{theorem}[Residual equilibration]
\label{thm:stage1}
Assume $x_i\neq x_j$ for all $i\neq j$. Fix $\lambda>0$ and $1/4\le\gamma<1/2$. Let
$\kappa_m=\lambda_{\min}(K_0)$ and, on the event $\kappa_m>0$, define
\[
 T_{\mathrm I}=
 \frac{\log(\log m/\epsm)}{2(\kappa_m+\lambda)}.
\]
As $m\to\infty$, the nonlinear gradient flow satisfies
\begin{align}
 T_{\mathrm I}&=O_p(\rho_m^{-1}\log(\rho_m/\lambda))=o_p(1),\\
 \sup_{t\le T_{\mathrm I}}\frac{\norm{K_t-K_0}_{\op}}{\rho_m}
 &=O_p(\rho_m^{-1}(\log m)^{3/2})=o_p(1),\label{eq:kernelstable}\\
 {\bm{e}}(T_{\mathrm I})&=-\epsm\bar K^{-1}{\bm{y}}+o_p(\epsm),\qquad
 \norm{{\bm{e}}(T_{\mathrm I})-{\bm{e}}_\qs(T_{\mathrm I})}_2=o_p(\epsm).
 \label{eq:stage1result}
\end{align}
\end{theorem}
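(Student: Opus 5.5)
The argument is a bootstrap, or continuity, argument on the interval $[0,T_{\mathrm I}]$. It compares the nonlinear residual equation \eqref{eq:residual} with its frozen-kernel version. We also use that every parameter moves by an amount that is small relative to its initial scale $m^{-\gamma}$.

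\textbf{Step 1: the time scale.} Since $K_0/\rho_m\to\bar K\succ0$ (Appendix~\ref{app:initialkernel}), we have $\kappa_m=\lambda_{\min}(K_0)=\rho_m(\lambda_{\min}(\bar K)+o_p(1))$. Also $\log(\log m/\epsm)=\log(\rho_m\log m/\lambda)=O(\log(\rho_m/\lambda))$, because $\log\log m=O(\log\rho_m)$ when $\gamma<1/2$. This gives $T_{\mathrm I}=O_p(\rho_m^{-1}\log(\rho_m/\lambda))=o_p(1)$.

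\textbf{Step 2: a priori parameter motion and kernel stability.} Define the stopping time $\tau$ as the first time either $\norm{K_t-K_0}_{\op}>\rho_m\lambda_{\min}(\bar K)/2$ or $\norm{\bm e(t)}_2>2\norm{\bm e(0)}_2$, and work on $[0,\tau\wedge T_{\mathrm I}]$.
\begin{itemize}
\item \emph{Residual size.} At initialization $\norm{\bm e(0)}_2=O_p(\sqrt{\rho_m})$, since $f_{\Theta(0)}(x_i)$ has variance of order $m\cdot m^{-4\gamma}\le\rho_m$. Energy decay of \eqref{eq:residual} then gives $\norm{\bm e(t)}_2\le e^{-2(\kappa_m/2+\lambda)t}\norm{\bm e(0)}_2+O(\lambda\norm{\bm y}/\kappa_m)$.
\item \emph{Per-neuron gradient.} Proposition~\ref{prop:flow}, written in $(a_k,w_k,b_k)$ coordinates, shows that each velocity is bounded by $C(|a_k|+r_k)\norm{\bm e}_2+\lambda(\cdot)$. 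Gaussian maxima give $\max_k(|a_k(0)|+r_k(0))=O_p(m^{-\gamma}\sqrt{\log m})$.
\item \emph{Displacement.} Integrating, and using Gronwall for the multiplicative growth, yields $\sup_k|\Delta\theta_k|\lesssim\sqrt{\log m}\,m^{-\gamma}\int_0^{T}\norm{\bm e}_2$. The integral $\int_0^{T}\norm{\bm e}_2$ is at most about $\norm{\bm e(0)}_2/\kappa_m+T\lambda\norm{\bm y}/\kappa_m=O_p(\rho_m^{-1/2})$.
\item \emph{Kernel change.} $K_t-K_0$ has two pieces. The first comes from smooth changes of the parameters, of size $m\cdot m^{-\gamma}\cdot(\text{displacement})$. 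The second comes from activation-pattern flips: a neuron flips only if its breakpoint lies within the displacement of some $x_i$. The fraction of such neurons is $O_p(\text{displacement}\cdot m^{\gamma}\sqrt{\log m})$, and each contributes $O(m^{-2\gamma}\log m)$ to $K$.
\end{itemize}
Summing these pieces should produce $\norm{K_t-K_0}_{\op}/\rho_m=O_p(\rho_m^{-1}(\log m)^{3/2})$. Here the constraint $\gamma\ge1/4$ is what makes the displacement bound beat $m^{-\gamma}$, so the bound is $o_p(1)$. That exits the stopping condition, so $\tau\ge T_{\mathrm I}$ with probability tending to one, and \eqref{eq:kernelstable} follows.

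\textbf{Step 3: residual comparison.} Write $\bm e=\bm e_\qs(0)+\bm d$ with $\bm e_\qs(0)=-\lambda(K_0+\lambda I)^{-1}\bm y$. Then
\[
\dot{\bm d}=-2(K_0+\lambda I)\bm d-2(K_t-K_0)\bm e .
\]
By Duhamel's formula,
\[
\norm{\bm d(T_{\mathrm I})}_2\le e^{-2(\kappa_m+\lambda)T_{\mathrm I}}\norm{\bm d(0)}_2+\sup_t\norm{K_t-K_0}_{\op}\int_0^{T_{\mathrm I}}\norm{\bm e}_2 .
\]
\begin{itemize}
\item The first term is $\epsm\norm{\bm d(0)}_2/\log m=o_p(\epsm)$. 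This holds after refining $\norm{\bm e(0)}_2$ against $\log m$; if needed, use that $\norm{\bm e(0)}_2=O_p(\sqrt{\log m})$ when measured appropriately, or enlarge the exponent slightly.
\item The second term is $O_p((\log m)^{3/2}\rho_m^{-1/2})\cdot O_p(\rho_m^{-1/2})$, which must be compared with $\epsm=\lambda/\rho_m$. This is the delicate point. A crude bound does not suffice, so I would sharpen it by splitting the integral at a short burn-in time: after that time $\norm{\bm e}_2=O(\epsm)$, while before it the kernel has moved only negligibly.
\end{itemize}
Finally, $\bm e_\qs(0)=-\epsm(K_0/\rho_m+\epsm I)^{-1}\bm y=-\epsm\bar K^{-1}\bm y+o_p(\epsm)$. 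In the same way, $\bm e_\qs(T_{\mathrm I})-\bm e_\qs(0)=O(\epsm\norm{K_t-K_0}_{\op}/\rho_m)=o_p(\epsm)$. Together these give \eqref{eq:stage1result}.

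\textbf{Main obstacle.} The hard part is controlling the activation-flip contribution to $K_t-K_0$ tightly enough, and then making the cross term $(K_t-K_0)\bm e$ genuinely $o_p(\epsm)$ rather than merely small. I would first handle the flips with a uniform anti-concentration bound on breakpoints $-b_k/w_k$ near the data points. I would then use a two-phase splitting of time so that the kernel perturbation only multiplies the already-equilibrated residual.
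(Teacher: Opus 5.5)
Your overall architecture is the paper's: per-neuron Gronwall with Gaussian maxima, anti-concentration of boundary contacts for mask flips, comparison with $\bm{e}_{\qs}(0)$, and the resolvent identity for $\bm{e}_{\qs}(T_{\mathrm I})-\bm{e}_{\qs}(0)$. However, two steps do not close as written.

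The first is the initial residual. For $\gamma\ge1/4$ one has $\operatorname{Var}f_{\Theta(0)}(x_i)=(1+x_i^2)m^{1-4\gamma}/2=O(1)$, so $\norm{\bm{e}(0)}_2=O_p(1)$. Your $O_p(\sqrt{\rho_m})$ is too weak. This is where $\gamma\ge1/4$ is used, not in the displacement bound: your displacement is $o(m^{-\gamma})$ for every $\gamma<1/2$. The weak bound hurts you twice.
\begin{itemize}
\item The choice of $T_{\mathrm I}$ buys exactly ${\mathrm e}^{-2(\kappa_m+\lambda)T_{\mathrm I}}=\epsm/\log m$. So your transient bound is $o_p(\epsm)$ only if $\norm{\bm{e}(0)}_2=o_p(\log m)$. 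With $O_p(\sqrt{\rho_m})$ it is $O_p(\epsm\sqrt{\rho_m}/\log m)$, and enlarging the exponent is not an option because the statement fixes $T_{\mathrm I}$.
\item In Step 2, $\int_0^{T_{\mathrm I}}\norm{\bm{e}}_2=O_p(\rho_m^{-1/2})$ makes your own bound on the smooth kernel piece of order $\sqrt{\log m}\,\rho_m^{-1/2}$ after dividing by $\rho_m$. That is far above the claimed $\rho_m^{-1}(\log m)^{3/2}$. With $\norm{\bm{e}(0)}_2=O_p(1)$ the integral is $O_p(\rho_m^{-1})$, and the stated rate becomes reachable.
\end{itemize}

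The second and more serious gap is the cross term. Freezing $K_0$ in the propagator leaves you with forcing $(K_t-K_0)\bm{e}$ while $\bm{e}$ is still $O_p(1)$. Even with corrected inputs, $\sup_t\norm{K_t-K_0}_{\op}\int_0^{T_{\mathrm I}}\norm{\bm{e}}_2=O_p((\log m)^{3/2}\rho_m^{-1})$, which exceeds $\epsm=\lambda/\rho_m$. The burn-in split cannot fix this. The kernel moves at a rate driven by $\norm{\bm{e}}$ and by the steady $-2\lambda K_t$ decay. So by the time $\norm{\bm{e}}_2=O(\epsm)$, roughly $\log(1/\epsm)/(2\kappa_m)$, most of that motion has already happened.

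The missing step is to move the time-varying kernel into the homogeneous part. With $\bm{z}_0=\bm{e}-\bm{e}_{\qs}(0)$,
\[
 \ddt{\bm{z}_0}=-2(K_t+\lambda I_n)\bm{z}_0-2(K_t-K_0)\bm{e}_{\qs}(0).
\]
Let $\Delta_m=\sup_{t\le T_{\mathrm I}}\norm{K_t-K_0}_{\op}$. Since $K_t\succeq(\kappa_m-\Delta_m)I_n$, the propagator contracts at rate $2(\kappa_m+\lambda-\Delta_m)$.
\begin{itemize}
\item The forcing now multiplies only the $O(\epsm)$ vector $\bm{e}_{\qs}(0)$. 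It contributes at most $\Delta_m\norm{\bm{e}_{\qs}(0)}_2/(\kappa_m+\lambda-\Delta_m)=o_p(\epsm)$.
\item The transient picks up a factor ${\mathrm e}^{2\Delta_mT_{\mathrm I}}$. This is $1+o_p(1)$ because $\Delta_mT_{\mathrm I}=O_p(\rho_m^{-1}(\log m)^{5/2})=o_p(1)$, an estimate your argument never invokes.
\end{itemize}
Equivalently, you could keep the weight ${\mathrm e}^{-2(\kappa_m+\lambda)(T_{\mathrm I}-s)}$ inside your Duhamel integral and absorb $(K_t-K_0)(\bm{e}-\bm{e}_{\qs}(0))$ by Gronwall.
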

The proof is given in Appendix~\ref{app:stage1}.
Figure~\ref{fig:stage1} shows the evolution of the NTK eigenvalues at finite width. The bound $\gamma\ge1/4$ is imposed to ensure that the initial
residual satisfies $\|{\bm{e}}(0)\|_2=O_p(1)$ in the infinite width limit.
For any finite width network, the initial residual is finite.

\Eqref{eq:kernelstable} shows that the kernel changes by
$o_p(\rho_m)$ during initial fitting, a vanishing fraction of its
initial scale. Throughout this stage, its eigenvalues therefore
remain much larger than the fixed weight decay coefficient $\lambda$.
Thus, the residual relaxes on a time
scale $T_{\mathrm I}=o_p(1)$ is much faster than the effect of weight decay.
For $\lambda>0$ and ${\bm{y}}\ne\bm{0}$, the forcing term $-2\lambda {\bm{y}}$ sustains the quasi-static residual
${\bm{e}}_\qs(t)=-\lambda(K_t+\lambda I_n)^{-1}{\bm{y}}$.
At the end of initial fitting, \eqref{eq:stage1result} shows that
the tracking error $o_p(\epsm)$ can be neglected. Thus the actual residual retains
this nonzero leading term, which generates the correlation
field governing subsequent neuron motion.

\subsection{Stage II: Discrete extrema and amplification of $q_e$}
\label{sec:stage2}

Let $\tilde {\bm{x}}_i=(x_i,1)^\top$ and define the angular boundary set
\[
 \mathcal B=\{\theta \in [0,2\pi):\tilde {\bm{x}}_i^\top {\bm{u}}(\theta)=0
                  \text{ for some }i\}.
\]
An \emph{activation cell} $I$ is a connected component of
$\mathbb{S}^1 \setminus\mathcal B$.
Its active sample set $\mathcal S_I=\{i:\tilde {\bm{x}}_i^\top {\bm{u}}(\theta)>0,\; \forall \theta \in I\}$
is the same for every $\theta\in I$; a cell is dead if this set is empty.
Define the residual moment on each cell by
\[
 {\bm{M}}_I({\bm{e}})=\frac1n\sum_{i\in\mathcal S_I}e_i\tilde {\bm{x}}_i.
\]
Within $I$, the residual correlation field has the sinusoidal form
\begin{equation}
 q_e(\theta)
 =M_{I,1}({\bm{e}})\cos\theta+M_{I,2}({\bm{e}})\sin\theta,
 \label{eq:sinusoid}
\end{equation}
where $M_{I,i}({\bm{e}})$ denotes the $i$th component of ${\bm{M}}_I({\bm{e}})$.

Recall that $\dot\theta_k=-2\beta_k p_e(\theta_k)$, where
$\beta_k=a_k/r_k$ and $p_e=\partial_\theta q_e$.
From \eqref{eq:sinusoid} we have $q_e''(\theta)=-q_e(\theta)$.
For fixed residual ${\bm{e}}$ and ratio $\beta_k$,
the angular dynamics have equilibrium points where
$p_e(\theta_*)=0$.
Within a cell with ${\bm{M}}_I({\bm{e}})\ne \bm{0}$, these points are 
the smooth extrema of $q_e$.
An equilibrium is locally attracting if and only if
$a_kq_e(\theta_*)<0$.
Thus positive neurons are attracted to negative minima,
and negative neurons to positive maxima.
The following proposition characterizes the discrete set of
extrema that can serve as condensation directions.

\begin{proposition}[Discrete extrema]
\label{prop:geometry}
Fix a residual ${\bm{e}}$ and suppose ${\bm{M}}_I({\bm{e}})\ne \bm{0}$ on every
non-dead activation cell. Let $\mathcal E({\bm{e}})$ denote the set
of strict local extrema of $q_e$, and define
\[
 \mathcal E_{\mathrm{int}}({\bm{e}})
 :=\mathcal E({\bm{e}})\setminus\mathcal B,
 \qquad
 \mathcal E_{\mathrm{bd}}({\bm{e}})
 :=\mathcal E({\bm{e}})\cap\mathcal B.
\]
These are, respectively, the smooth extrema inside activation
cells and the extrema at activation boundaries, and satisfy
\[
 1\le|\mathcal E({\bm{e}})|\le4n,
 \qquad
 |\mathcal E_{\mathrm{int}}({\bm{e}})|,\,
 |\mathcal E_{\mathrm{bd}}({\bm{e}})|\le2n.
\]
\end{proposition}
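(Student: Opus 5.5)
Our approach is to count cells and then count extrema cell by cell. The boundary set $\mathcal B$ has the following description. Each sample gives the line $\tilde{\bm x}_i^\top{\bm u}(\theta)=x_i\cos\theta+\sin\theta=0$, which meets the circle in exactly two antipodal angles. Hence $|\mathcal B|\le 2n$, and $\mathbb S^1\setminus\mathcal B$ has at most $2n$ activation cells; it has exactly $|\mathcal B|$ cells when $\mathcal B\neq\emptyset$, which holds since $n\ge1$. This already gives $|\mathcal E_{\mathrm{bd}}({\bm e})|\le|\mathcal B|\le 2n$, because $\mathcal E_{\mathrm{bd}}\subset\mathcal B$.

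For interior extrema, we work on one cell $I$. By \eqref{eq:sinusoid}, $q_e(\theta)=\|{\bm M}_I\|_2\cos(\theta-\alpha_I)$ for some phase $\alpha_I$. On a dead cell $q_e\equiv0$, so there are no strict extrema there. On a non-dead cell, the hypothesis ${\bm M}_I({\bm e})\ne\bm 0$ gives a nonzero sinusoid, whose critical points are $\alpha_I$ and $\alpha_I+\pi$. A cell is an open arc strictly contained in an arc between antipodal boundary points, so it has length at most $\pi$. Since it is open, it contains at most one of two points that are exactly $\pi$ apart. Thus each cell carries at most one smooth extremum, and it is strict because $q_e''=-q_e\neq0$ there. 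This gives $|\mathcal E_{\mathrm{int}}|\le 2n$ and, combined with the boundary count, $|\mathcal E|\le 4n$. I expect the length argument to need care in one case: the arc between two antipodal boundary points is exactly a half-circle, and its open interior excludes one endpoint. The claim still holds because an open arc of length at most $\pi$ cannot contain both $\alpha$ and $\alpha+\pi$.

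The lower bound $|\mathcal E|\ge1$ is the main obstacle. The function $q_e$ is continuous on the compact circle, so it attains a maximum and a minimum. The danger is that these are not strict, for instance if $q_e$ is constant on an arc, which happens only on dead cells where $q_e=0$. We first show some non-dead cell exists: for any $i$, the angles with $\tilde{\bm x}_i^\top{\bm u}>0$ form an open half-circle, which meets some cell. On that cell $q_e$ is a nonzero sinusoid, so $q_e$ is not identically zero. Without loss of generality, $q_e$ takes a positive value somewhere (otherwise we argue symmetrically with the minimum). Then the global maximum value $q^*>0$.

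We claim any global maximizer $\theta^*$ is strict. It cannot lie in a dead cell or on the closure of one in a nonstrict way, since $q_e=0<q^*$ on dead cells. If $\theta^*$ lies inside a non-dead cell, it is an isolated maximum of a nonzero sinusoid, hence strict. If $\theta^*\in\mathcal B$, then on each adjacent cell $q_e$ is either identically zero or a sinusoid with value $q^*>0$ at the endpoint. A sinusoid equal to its maximum value $q^*$ along an interval would be constant, which is impossible for a nonzero sinusoid. So on each side, $q_e<q^*$ in a punctured neighbourhood. Here we use that sinusoids take any given value at only isolated points, and that $\mathcal B$ is finite, so $\theta^*$ has well-defined one-sided neighbouring cells. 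Hence $\theta^*\in\mathcal E({\bm e})$ and $|\mathcal E({\bm e})|\ge1$.
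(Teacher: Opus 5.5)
Your proof is correct and follows the paper's argument: count the at most $2n$ antipodal boundary angles and cells, use that the critical points of the nonzero sinusoid $q_e$ on a cell of length at most $\pi$ are $\pi$ apart (so each cell has at most one smooth extremum, and dead cells have none), and obtain $|\mathcal E(\bm{e})|\ge1$ from a nonzero global extremum. The only difference is in how strictness of that extremum is checked. The paper notes that the global optimizers lie in the finite set of interior critical points and boundary angles, so each one is isolated. You instead verify one-sided strictness on each adjacent cell directly, and you also justify that a non-dead cell exists, which the paper takes for granted.
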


The proof and a discussion of boundary extrema are given in
Appendix~\ref{app:geometry}.
We now focus on smooth extrema and their attraction strength.
Consider a smooth negative minimum $\theta_*(t)$ and a positive
neuron in the same activation cell.
Writing $\delta_k(t)=\theta_k(t)-\theta_*(t)$, the sinusoidal
form of the field \eqref{eq:sinusoid} gives
\[
 \ddt{\delta_k}
 =2\beta_k(t)q_t(\theta_*(t))\sin\delta_k
 -\ddt{\theta_*}.
\]
Since $\beta_k(t)q_t(\theta_*(t))<0$, the first term
drives the neuron toward the minimum.
For a fixed scale ratio, a more negative value of
$q_t(\theta_*(t))$ produces stronger attraction.

We next examine how weight decay changes the field's strength
during amplitude compression.
Theorem~\ref{thm:stage1} provides initial tracking of ${\bm{e}}_\qs$.
The following result shows that slow kernel decay maintains this
tracking and amplifies the residual correlation field during early
amplitude compression. Set
$C_\phi:=\sup_{\theta\notin\mathcal B}\norm{{\bm{\phi}}_\theta({\bm{X}})}_n$.

\begin{theorem}[Kernel decay and field amplification]
\label{thm:earlycompression}
Let ${\bm{y}}\ne\bm{0}$, $t_0=T_{\mathrm I}$ and consider $[t_0,t_1]$ with
$\lambda(t_1-t_0)\le L$, where $L$ is fixed.
Suppose the activation masks remain fixed and every non-dead
neuron satisfies
\[
 B^{-1}\le\left|\frac{a_k(t)}{r_k(t)}\right|\le B,
 \qquad
 |q_t(\theta_k(t))|+|p_t(\theta_k(t))|
 \le\varepsilon\lambda.
\]
Assume $K_{t_0}\succ0$ and
\[
 \frac{\norm{K_{t_0}}_{\op}}{\lambda_{\min}(K_{t_0})}\le\chi,
 \qquad
 \frac{\lambda}{\lambda_{\min}(K_{t_0})}\le\varepsilon,
\]
where $B,\chi$ are fixed and $\varepsilon$ is sufficiently small.
Set $\delta_0=\norm{{\bm{e}}(t_0)-{\bm{e}}_\qs(t_0)}_n/\norm{{\bm{e}}_\qs(t_0)}_n$.
Then, writing $\tau=t-t_0$, there exists $C=C(B,L,\chi)$ such that
\begin{align}
 \norm{K_t-{\mathrm{e}}^{-2\lambda\tau}K_{t_0}}_{\op}
 &\le C\varepsilon {\mathrm{e}}^{-2\lambda\tau}
       \norm{K_{t_0}}_{\op},
 \label{eq:earlykerneldecay}\\
 \sup_{s\in[t_0,t_1]}
 \frac{\norm{{\bm{e}}(s)-{\bm{e}}_\qs(s)}_n}{\norm{{\bm{e}}_\qs(s)}_n}
 &\le C(\delta_0+\varepsilon),
 \label{eq:compressiontracking}\\
 \sup_\theta
 \left|q_t(\theta)-{\mathrm{e}}^{2\lambda\tau}q_{t_0}(\theta)\right|
 &\le C(\varepsilon+\delta_0){\mathrm{e}}^{2\lambda\tau}
       C_\phi\norm{{\bm{e}}_\qs(t_0)}_n.
 \label{eq:earlyfieldstretch}
\end{align}
\end{theorem}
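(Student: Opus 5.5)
With the activation masks fixed, the Jacobian is linear in the parameters. So I would first write the kernel explicitly and treat weight decay as the main driver, with the data term a small perturbation of size $\varepsilon$. With fixed masks,
\[
K_t=\frac1n\sum_k\Big(\bm\phi_k\bm\phi_k^\top+a_k^2D_k\tilde{\bm X}\tilde{\bm X}^\top D_k\Big),
\]
where $\bm\phi_k=r_k\bm\phi_{\theta_k}(\bm X)$ and $D_k$ is the fixed activation mask. Each neuron's contribution is therefore a quadratic form in $(a_k,w_k,b_k)$ with constant coefficient matrices. By Proposition~\ref{prop:flow}, $\dot a_k=-\lambda a_k-2r_kq_k$ and $\dot{\bm v}_k=-\lambda\bm v_k-2a_k\,\partial_{\bm v}q$. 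Since $|q_k|+|p_k|\le\varepsilon\lambda$ and $|a_k/r_k|\in[B^{-1},B]$, the data-driven part of the relative rates obeys
\[
\Big|\tfrac{d}{dt}\log a_k^2+2\lambda\Big|\le C\varepsilon\lambda,\qquad \Big|\tfrac{d}{dt}\log r_k^2+2\lambda\Big|\le C\varepsilon\lambda,
\]
and the angular speed satisfies $|\dot\theta_k|\le 2B\varepsilon\lambda$.

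\textbf{Kernel decay (\eqref{eq:earlykerneldecay}).} Set $\tilde\Theta=e^{\lambda\tau}\Theta$. This rescaled vector moves by a relative amount $O(\varepsilon\lambda\tau)$. Since $\lambda\tau\le L$, Grönwall gives $\norm{\tilde\Theta_k(t)-\Theta_k(t_0)}\le C(B,L)\varepsilon\norm{\Theta_k(t_0)}$ neuron by neuron. Both $|a|$ and $r$ are controlled, using the ratio bound to convert between them. Plugging into the quadratic forms, each neuron's contribution to $e^{2\lambda\tau}K_t-K_{t_0}$ is at most $C\varepsilon$ times its own positive semidefinite contribution to $K_{t_0}$. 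For the direction part, the difference $\bm\phi_{\theta_k(t)}-\bm\phi_{\theta_k(t_0)}$ is bounded by $|\Delta\theta_k|$ times a fixed constant inside the cell. I would absorb this by comparing $\norm{\tilde{\bm X}}$-weighted terms to $\lambda_{\min}$, using $\chi$. I expect this absorption to be the delicate point: per-neuron relative perturbations of PSD pieces must sum to an operator-norm bound $C\varepsilon\norm{K_{t_0}}_{\op}$. The plan is to bound each cross term by $\varepsilon(\text{PSD}_k+\text{PSD}_k')$ via Cauchy--Schwarz, where $\text{PSD}_k'$ is a reference PSD piece built from $\tilde{\bm X}$ with weights $a_k^2+r_k^2$. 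The trace of these reference pieces is comparable to $\operatorname{tr}K_{t_0}\le n\norm{K_{t_0}}_{\op}$.

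\textbf{Tracking (\eqref{eq:compressiontracking}).} Write $\bm d=\bm e-\bm e_\qs$. By \eqref{eq:residual},
\[
\dot{\bm d}=-2(K_t+\lambda)\bm d-\dot{\bm e}_\qs,\qquad \dot{\bm e}_\qs=\lambda(K_t+\lambda)^{-1}\dot K_t(K_t+\lambda)^{-1}\bm y.
\]
From the first step, $K_t\approx e^{-2\lambda\tau}K_{t_0}$. Because $\lambda\tau\le L$, we have $\lambda_{\min}(K_t)\ge c\,e^{-2L}\lambda_{\min}(K_{t_0})\gg\lambda$, so the contraction rate is at least $c\lambda/\varepsilon$. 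Differentiating the same quadratic forms gives $\norm{\dot K_t}_{\op}\le C\lambda\norm{K_t}_{\op}$, hence $\norm{\dot{\bm e}_\qs}_n\le C\chi\lambda\norm{\bm e_\qs}_n$. Moreover, $\norm{\bm e_\qs}_n$ changes by at most a factor $e^{\pm CL}$ on the interval. A variation-of-constants estimate then gives
\[
\norm{\bm d(s)}_n\le e^{-c\lambda(s-t_0)/\varepsilon}\norm{\bm d(t_0)}_n+C\varepsilon\sup_s\norm{\bm e_\qs(s)}_n,
\]
which yields the bound $C(\delta_0+\varepsilon)$.

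\textbf{Field amplification (\eqref{eq:earlyfieldstretch}).} Since $K_t\gg\lambda$, expand
\[
\bm e_\qs(t)=-\lambda K_t^{-1}\bm y+O(\varepsilon)\norm{\bm e_\qs}.
\]
Combining this with step one gives $K_t^{-1}=e^{2\lambda\tau}K_{t_0}^{-1}(I+O(\varepsilon))$, hence $\bm e_\qs(t)=e^{2\lambda\tau}\bm e_\qs(t_0)+O(\varepsilon)e^{2\lambda\tau}\norm{\bm e_\qs(t_0)}_n$. Adding the tracking bound gives
\[
\norm{\bm e(t)-e^{2\lambda\tau}\bm e(t_0)}_n\le C(\varepsilon+\delta_0)e^{2\lambda\tau}\norm{\bm e_\qs(t_0)}_n.
\]
Finally, $q_t$ is linear in $\bm e$, so Cauchy--Schwarz with $C_\phi$ gives the uniform field estimate. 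All constants depend only on $B,L,\chi$.
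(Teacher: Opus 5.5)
Your proposal is correct, and it differs from the paper mainly in the kernel-decay step.

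\textbf{Tracking and amplification.} These steps are essentially the paper's. You use variation of constants for $\bm e-\bm e_\qs$, driven by $\dot{\bm e}_\qs=-(K_t+\lambda I_n)^{-1}\dot K_t\bm e_\qs$, then a resolvent comparison and Cauchy--Schwarz with $C_\phi$. Replacing the paper's exact inverse-matrix identity by a Gr\"onwall factor $e^{\pm CL}$ for $\norm{\bm e_\qs}_n$ changes nothing.

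\textbf{Kernel decay.} The paper never integrates the parameters. It writes $K_k=\frac1n(r_k^2\bm\phi_k\bm\phi_k^\top+a_k^2S_k)$, with $S_k=\bm\phi_k\bm\phi_k^\top+\bm\psi_k\bm\psi_k^\top$ constant under the fixed mask. Differentiating gives $\dot K_t=-2\lambda K_t+E(t)$ with $-4B\varepsilon\lambda K_t\preceq E(t)\preceq 4B\varepsilon\lambda K_t$. Integrating the scalar inequalities for $e^{2\lambda\tau}K_t$ then yields $e^{-h}K_{t_0}\preceq e^{2\lambda\tau}K_t\preceq e^{h}K_{t_0}$, with $h\le 4BL\varepsilon$. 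This one identity gives the kernel estimate, the lower bound on $\lambda_{\min}(K_t)$, the condition-number bound, and $\norm{\dot K_t}_{\op}\le(2+4B\varepsilon)\lambda\norm{K_t}_{\op}$ for tracking.

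Your Gr\"onwall on $e^{\lambda\tau}\Theta$ needs no ratio bound, since each rescaled neuron's velocity is at most $2\varepsilon\lambda$ times its norm. It makes the mechanism more transparent: weight decay is an exact rescaling up to an $O(\varepsilon)$ drift per neuron. It is less economical, though, because you still have to differentiate the kernel for the tracking step.

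\textbf{One tightening.} For the step you flag as delicate, you propose the trace comparison $\operatorname{tr}K_{t_0}\le n\norm{K_{t_0}}_{\op}$. That would put a factor $n$ into $C$, which the statement does not allow. Stay in Loewner order instead.
\begin{itemize}
\item Since $r_k\bm\phi_{\theta_k}(\bm X)=D_k\tilde{\bm X}\bm v_k$ is linear in $\bm v_k$, the error vector $D_k\tilde{\bm X}\Delta\bm v_k$ has outer product $\preceq\norm{\Delta\bm v_k}_2^2S_k$. Here $S_k=D_k\tilde{\bm X}\tilde{\bm X}^\top D_k$ is the same matrix as above.
\item The ratio bound gives $(a_k^2+r_k^2)S_k\preceq(1+B^2)\,nK_k(t_0)$.
\item Combined with your Cauchy--Schwarz split of the cross terms, this gives $\pm\bigl(e^{2\lambda\tau}K_k(t)-K_k(t_0)\bigr)\preceq C(B,L)\,\varepsilon K_k(t_0)$ for each neuron.
\end{itemize}
Summing over neurons recovers the paper's two-sided bound without invoking $\chi$ or $\lambda_{\min}$ at this stage.
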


The proof is given in Appendix~\ref{app:earlycompression}. Stage I gives $\delta_0=o_p(1)$ at $T_{\mathrm I}$, and
\eqref{eq:compressiontracking} controls residual tracking
throughout the stated compression interval.
Consistent with this result, Figure~\ref{fig:three_stages}(b)
shows that ${\bm{e}}(t)$ remains close to ${\bm{e}}_{\mathrm{qs}}(t)$
after rapid fitting.
When the correlations are small relative to $\lambda$,
weight decay dominates parameter evolution and approximately
rescales the kernel by ${\mathrm{e}}^{-2\lambda\tau}$. As weight decay reduces the kernel,
${\bm{e}}_\qs(t)\approx-\lambda K_t^{-1}{\bm{y}}$ grows, amplifying the correlation field.

The leading order scaling of weight decay changes the depths of the extrema
but leaves their locations unchanged. However, in the actual dynamics,
Figure~\ref{fig:q} shows that the smooth extrema also move during Stage II.
The following section controls this motion, which results in neuron condensation.

\begin{figure}[!t]
 \centering\includegraphics[width=0.90\linewidth]{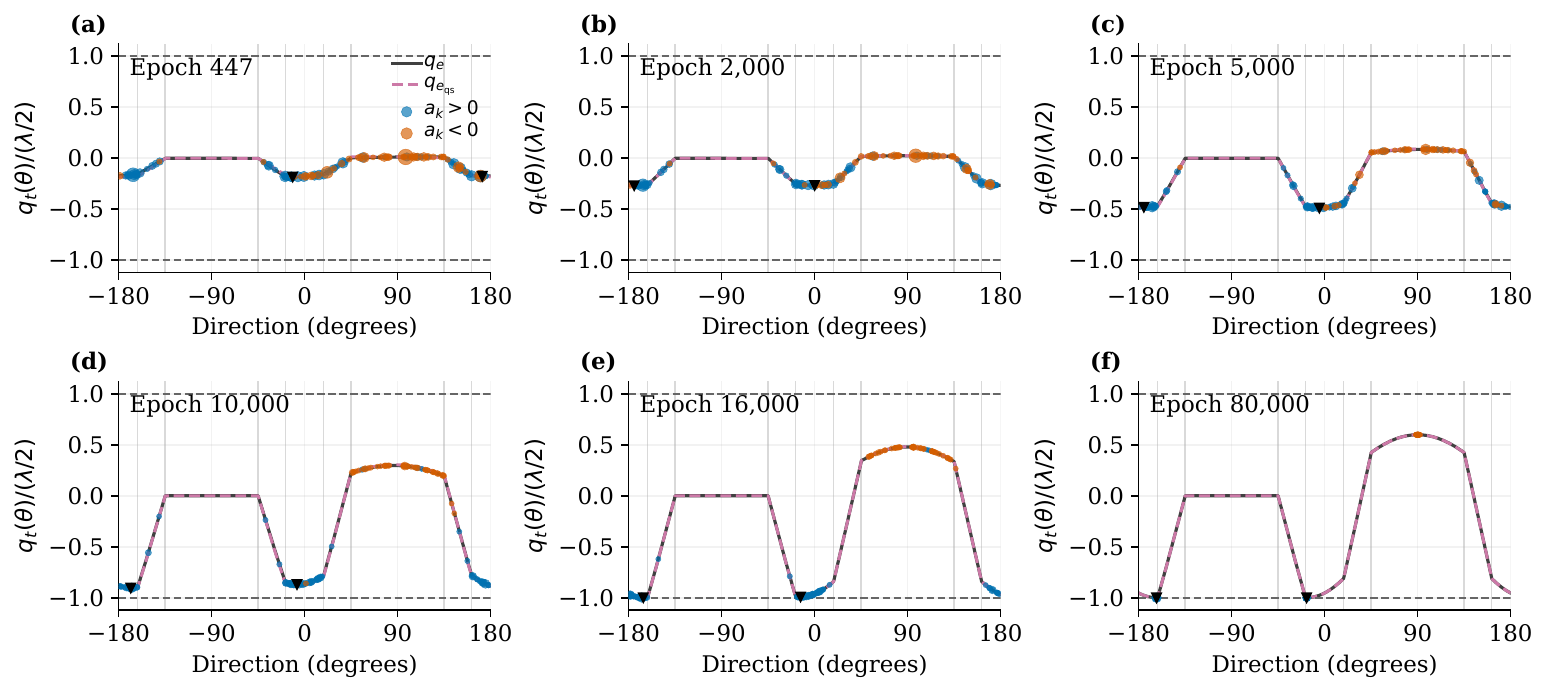}
 \caption{Residual correlation field evolution and surviving neurons.
Curves show normalized $q_t(\theta)/(\lambda/2)$, computed from the actual residual
${\bm{e}}$ (black solid) and its quasi-stationary approximation
${\bm{e}}_{\mathrm{qs}}$ (pink dashed), which nearly overlap.
Blue and orange dots represent positive and negative neurons  respectively. Black triangles mark smooth local minima.
During Stage II (up to epoch 16,000), the field deepens substantially,
while neuron directions remain dispersed around its extrema.
During Stage III, neurons condense at these extrema.}
 \label{fig:q}
\end{figure}

\subsection{Stage III: Slow drift of extrema and neuron condensation}
\label{sec:stage3}

Let $\mathcal N(t)=\{k:{\bm{\phi}}_{\theta_k(t)}({\bm{X}})\ne\bm{0}\}$ be the set of
non-dead neurons. Define their total amplitude, total amplitude
variation rate, and angular energy by
\begin{equation}
 \Acal(t)=\sum_{k\in\mathcal N(t)}A_k(t),\qquad
 \VA(t)=\sum_{k\in\mathcal N(t)}\left|\ddt{A_k(t)}\right|,\qquad
 P(t)=\sum_{k\in\mathcal N(t)}A_k(t)p_k(t)^2.
 \label{eq:energydefs}
\end{equation}
We also use $C_\psi:=\sup_{\theta\notin\mathcal B}
\norm{{\bm{\psi}}_\theta({\bm{X}})}_n$ and $C_\phi=\sup_{\theta\notin\mathcal B}\norm{{\bm{\phi}}_\theta({\bm{X}})}_n$.

Attracting extrema of the residual correlation field evolve
during training.
To understand whether neurons can track these extrema and
condense around them, we first bound their drift.
The following theorem shows how amplitude variation $\VA$
controls the angular energy $P$, and how these quantities
jointly bound the velocity of the residual correlation field extrema.

\begin{theorem}[Field drift]
\label{thm:energy}
Suppose that, for all $t\ge T$, the gradient flow remains in
fixed open activation cells and every non-dead neuron satisfies
$a_k^2=r_k^2=A_k>0$.
Set $\overline{\Acal}_T:=\Phi(\Theta(T))/\lambda$.
Then, for all $t\ge T$,
\begin{align}
 P(t)
 &\le {\mathrm{e}}^{-2\lambda(t-T)}P(T)
 +\frac{C_\phi^2}{4}
   \int_T^t {\mathrm{e}}^{-2\lambda(t-s)}\VA(s)^2\,ds,
 \label{eq:Pbound}\\
 \norm{\ddt{{\bm{e}}(t)}}_n
 &\le C_\phi\VA(t)
 +2C_\psi\sqrt{\overline{\Acal}_T P(t)}.
 \label{eq:debound}
\end{align}
If $\theta_*(t)$ is a differentiable branch of smooth negative
minima in a fixed cell with depth $R(t) = -q_t(\theta_*(t))\ge R_0>0$ for all $t\ge T$,
then
\begin{equation}
 \left|\ddt{\theta_*(t)}\right|
 \le \frac{C_\psi}{R_0}
 \left[
 C_\phi\VA(t)
 +2C_\psi\sqrt{\overline{\Acal}_T P(t)}
 \right].
 \label{eq:starspeed}
\end{equation}
\end{theorem}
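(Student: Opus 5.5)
The plan is to use the balance assumption to write every neuron's dynamics in amplitude–angle form, and then exploit a cancellation in $\ddt{P}$. Under $a_k^2=r_k^2=A_k$ we have $\beta_k=a_k/r_k=s_k$ and $A_k=s_ka_kr_k$. Proposition~\ref{prop:flow} gives $\ddt{(a_kr_k)}=-2q_k(a_k^2+r_k^2)-2\lambda a_kr_k$. Hence
\[
\ddt{A_k}=-4s_kA_kq_k-2\lambda A_k,\qquad \ddt{\theta_k}=-2s_kp_k .
\]
Since the flow stays in fixed open activation cells for $t\ge T$, $\mathcal N(t)$ is constant. The features $\phi_\theta(x_i)$ and $\psi_\theta(x_i)$ are smooth in $\theta$ there, with $\partial_\theta\psi_\theta=-\phi_\theta$, so $\partial_\theta p_t=-q_t$ by \eqref{eq:sinusoid}. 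Differentiating $f_\Theta=\sum_k s_kA_k\phi_{\theta_k}$ along the flow gives the residual decomposition
\[
\ddt{{\bm{e}}}=\bm g-2\bm h,\qquad \bm g:=\sum_k s_k\dot A_k{\bm{\phi}}_{\theta_k}({\bm{X}}),\quad \bm h:=\sum_k A_kp_k{\bm{\psi}}_{\theta_k}({\bm{X}}).
\]

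\textbf{Bound on $P$.} I will compute $\ddt{P}=\sum_k\big(\dot A_kp_k^2+2A_kp_k\dot p_k\big)$, where
\[
\dot p_k=\ip{\dot{\bm{e}}}{{\bm{\psi}}_{\theta_k}({\bm{X}})}+\partial_\theta p_t(\theta_k)\,\dot\theta_k=\ip{\dot{\bm{e}}}{{\bm{\psi}}_{\theta_k}({\bm{X}})}+2s_kq_kp_k .
\]
The second piece contributes $4s_kA_kq_kp_k^2=-(\dot A_k+2\lambda A_k)p_k^2$. This exactly cancels the $\dot A_kp_k^2$ term and leaves
\[
\ddt{P}=-2\lambda P+2\ip{\dot{\bm{e}}}{\bm h}.
\]
Substituting $2\bm h=\bm g-\dot{\bm{e}}$ gives $2\ip{\dot{\bm{e}}}{\bm h}=\ip{\dot{\bm{e}}}{\bm g}-\norm{\dot{\bm{e}}}_n^2\le\norm{\bm g}_n^2/4$. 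Because $\norm{\bm g}_n\le C_\phi\VA$, we get $\ddt{P}\le-2\lambda P+\tfrac{C_\phi^2}{4}\VA^2$. Gr\"onwall's inequality then yields \eqref{eq:Pbound}. I expect this cancellation to be the crux of the argument; everything else is direct estimation.

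\textbf{Bound on $\dot{\bm{e}}$.} From the decomposition, $\norm{\dot{\bm{e}}}_n\le C_\phi\VA+2\norm{\bm h}_n$. Also $\norm{\bm h}_n\le C_\psi\sum_kA_k|p_k|\le C_\psi\sqrt{\Acal P}$ by Cauchy--Schwarz with weights $A_k$. To bound $\Acal$, I use that $\Phi$ is nonincreasing along gradient flow, and that $\Phi\ge\frac\lambda2\sum_k(a_k^2+r_k^2)\ge\lambda\sum_{k\in\mathcal N}A_k=\lambda\Acal$. Therefore $\Acal(t)\le\Phi(\Theta(T))/\lambda=\overline{\Acal}_T$ for $t\ge T$, which gives \eqref{eq:debound}.

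\textbf{Drift of the minimum.} I will differentiate the identity $p_t(\theta_*(t))=0$ in $t$. This gives $\ip{\dot{\bm{e}}}{{\bm{\psi}}_{\theta_*}({\bm{X}})}+\partial_\theta p_t(\theta_*)\,\dot\theta_*=0$. Since $\partial_\theta p_t(\theta_*)=-q_t(\theta_*)=R(t)\ge R_0$, we obtain $|\dot\theta_*|\le C_\psi\norm{\dot{\bm{e}}}_n/R_0$. Combining this with \eqref{eq:debound} gives \eqref{eq:starspeed}.
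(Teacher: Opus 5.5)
Your proposal is correct and follows the paper's proof essentially step for step. It uses the same balanced amplitude--angle dynamics and residual split $\dot{\bm{e}}=\bm{g}-2\bm{h}$ (the paper's ${\bm{v}}_A+{\bm{v}}_\theta$), the same cancellation of $\dot A_k p_k^2$ against the angular chain-rule term, the same completion of the square giving the $\tfrac14 C_\phi^2\VA^2$ forcing, the same weighted Cauchy--Schwarz with the budget $\Acal\le\overline{\Acal}_T$, and the same implicit differentiation of $p_t(\theta_*(t))=0$.
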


The proof is given in Appendix~\ref{app:driftproof}.
Theorem~\ref{thm:energy} bounds the drift of attracting minima
by the amplitude variation $\VA(t)$ and the angular energy $P(t)$.
The latter is itself controlled by amplitude variation:
if $\VA(t)\le\nu_A$ for all $t\ge T$, then \eqref{eq:Pbound} gives
\begin{equation}
 P(t)\le {\mathrm{e}}^{-2\lambda(t-T)}P(T)
       +\frac{C_\phi^2\nu_A^2}{8\lambda}.
 \label{eq:angualr energe}
\end{equation}
The first term in \eqref{eq:angualr energe} captures the influence of the initial angular energy and decays exponentially.
Once it falls below fixed $\varepsilon^2$,
substituting the resulting bound on $P(t)$ and
$\VA(t)\le\nu_A$ into \eqref{eq:starspeed} yields
\begin{equation}
\begin{aligned}
 \left|\ddt{\theta_*(t)}\right|
 &\le \frac{C_\psi}{R_0}
 \left[
 C_\phi\nu_A
 +2C_\psi\sqrt{\overline{\Acal}_T
 \left(\varepsilon^2+\frac{C_\phi^2\nu_A^2}{8\lambda}\right)}
 \right] \\
 &\le \frac{C_\psi C_\phi}{R_0}
 \left(1+C_\psi\sqrt{\frac{\overline{\Acal}_T}{2\lambda}}\right)\nu_A
 +\frac{2C_\psi^2\sqrt{\overline{\Acal}_T}}{R_0}\varepsilon.
\end{aligned}
 \label{eq:starspeed-nuA}
\end{equation}
Thus, the drift speed is bounded by a term proportional to $\nu_A$.
Figure~\ref{fig:three_stages}(c) shows that $\VA(t)$ remains
small during Stage III, supporting a small upper bound $\nu_A$
over this period and motivating the slow extrema drift.

Weight decay also reduces the absolute imbalance $a_k^2-r_k^2$
through \eqref{eq:balance}, motivating the balanced model for
surviving neurons.
The next theorem shows that sufficiently slow drift keeps nearby
neurons in an attracting band and yields angular contraction.

\begin{theorem}[Angular contraction]
\label{thm:condensation}
Let $G$ be a fixed group of at least two positive neurons and
$\theta_*(t)$ a differentiable branch of smooth negative minima.
For all $t\ge T$, assume that $a_k(t)=r_k(t)>0$ for every $k\in G$,
the minimum depth satisfies $R(t)\ge R_0>0$, and the band
$[\theta_*(t)-d,\theta_*(t)+d]$ lies inside one fixed open
activation cell, where $0<d<\pi/2$.
If
\[
 |\theta_k(T)-\theta_*(T)|\le d\quad(k\in G),
 \qquad
 \sup_{t\ge T}\left|\ddt{\theta_*(t)}\right|<2R_0\sin d,
\]
then every neuron in $G$ remains in the band and, with
$c_d=2R_0\cos d>0$,
\begin{equation}
 \operatorname{diam}_{\theta}G(t)
 \le {\mathrm{e}}^{-c_d(t-T)}\operatorname{diam}_{\theta}G(T),
 \qquad t\ge T.
 \label{eq:condensation}
\end{equation}
Consequently, $G$ undergoes neuron condensation.
If additionally $\ddt{\theta_*(t)}\to0$, then
$|\theta_k(t)-\theta_*(t)|\to0$ for every $k\in G$.
\end{theorem}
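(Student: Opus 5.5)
The plan is to reduce everything to a scalar ODE for the offsets $\delta_k(t)=\theta_k(t)-\theta_*(t)$, using the sinusoidal form \eqref{eq:sinusoid} of the field inside a cell.

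\textbf{Step 1: exact form of the field near the minimum.} Fix $t\ge T$. On the open cell containing the band, $q_t$ satisfies $q_t''=-q_t$. At $\theta_*(t)$ it has $p_t(\theta_*)=0$ and $q_t(\theta_*)=-R(t)$. Uniqueness for this second-order ODE then gives, for $\theta$ in that cell,
\[
q_t(\theta)=-R(t)\cos(\theta-\theta_*(t)),\qquad p_t(\theta)=R(t)\sin(\theta-\theta_*(t)).
\]
By Proposition~\ref{prop:flow} and the balance assumption $a_k=r_k>0$, we have $\beta_k=1$ and $\dot\theta_k=-2p_t(\theta_k)$. Hence, as long as $\theta_k$ stays in the band,
\[
\dot\delta_k=-2R(t)\sin\delta_k-\dot\theta_*(t).
\]
This is the displayed equation before Theorem~\ref{thm:earlycompression}, specialized to $\beta_k=1$.

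\textbf{Step 2: invariance of the band.} Let $\nu:=\sup_{t\ge T}|\dot\theta_*|<2R_0\sin d$. Let $\tau$ be the first time after $T$ at which some $|\delta_k|$ exceeds $d$, and suppose $\tau<\infty$.
\begin{itemize}
\item Before $\tau$, Step 1 applies.
\item At $\tau$ we have $\delta_k=\pm d$ for some $k$.
\item If $\delta_k=d$, then $\dot\delta_k\le-2R_0\sin d+\nu<0$; if $\delta_k=-d$, then $\dot\delta_k\ge 2R_0\sin d-\nu>0$.
\end{itemize}
Either way $|\delta_k|$ is strictly decreasing at $\tau$, which contradicts exiting there. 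So every neuron in $G$ stays in the band for all $t\ge T$.

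This step is the main obstacle. The delicate points are that Step 1 is only valid inside the cell, and that the band must be used before we know the neuron remains in it. I would handle this with the first-exit argument above applied to a slightly enlarged band $d'<d$ close to $d$. The strict inequality in the hypothesis leaves room for this; then let $d'\uparrow d$.

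\textbf{Step 3: contraction.} For $i,j\in G$, the drift term cancels:
\[
\frac{d}{dt}(\theta_i-\theta_j)=-2R(t)(\sin\delta_i-\sin\delta_j)=-2R(t)\cos\xi_{ij}(t)\,(\theta_i-\theta_j),
\]
with $\xi_{ij}$ between $\delta_i$ and $\delta_j$ by the mean value theorem. Since $|\xi_{ij}|\le d<\pi/2$, we get $2R\cos\xi_{ij}\ge 2R_0\cos d=c_d$. Working with $(\theta_i-\theta_j)^2$ to avoid nondifferentiability gives $\frac{d}{dt}(\theta_i-\theta_j)^2\le-2c_d(\theta_i-\theta_j)^2$. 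Grönwall then yields $|\theta_i-\theta_j|(t)\le e^{-c_d(t-T)}|\theta_i-\theta_j|(T)\le e^{-c_d(t-T)}\operatorname{diam}_\theta G(T)$. Taking the maximum over pairs gives \eqref{eq:condensation}, and hence condensation.

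\textbf{Step 4: convergence to the minimum.} On $|\delta|\le d$ we have $\sin\delta/\delta\ge\sin d/d$, so $\frac{d}{dt}\tfrac12\delta_k^2\le-2R_0\frac{\sin d}{d}\delta_k^2+|\dot\theta_*||\delta_k|$. This gives $D^+|\delta_k|\le-\mu|\delta_k|+|\dot\theta_*(t)|$ with $\mu=2R_0\sin d/d$. By Grönwall,
\[
|\delta_k(t)|\le e^{-\mu(t-T)}d+\int_T^t e^{-\mu(t-s)}|\dot\theta_*(s)|\,ds.
\]
The integral tends to $0$ when $\dot\theta_*\to0$: split it at $t/2$, bounding the early part by $\nu e^{-\mu t/2}/\mu$ and the late part by $\sup_{s\ge t/2}|\dot\theta_*(s)|/\mu$.
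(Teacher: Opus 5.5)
Your proposal is correct and follows the paper's proof essentially step for step: the sinusoidal representation $q_t(\theta)=-R(t)\cos(\theta-\theta_*(t))$, a first-exit argument for invariance of the band, pairwise contraction in which the drift $\dot\theta_*$ cancels, and a Dini-derivative tracking bound whose convolution is split to get $\delta_k\to0$. Your mean-value/squared-difference estimate is the paper's divided-difference bound $c_{ij}\ge\cos d$.

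The differences are cosmetic. Your bound $\sin\delta/\delta\ge\sin d/d$ gives a slightly better tracking rate $2R_0\sin d/d\ge c_d$. The $d'<d$ refinement in Step~2 is unnecessary, and as stated it would not cover initial data with $|\delta_k(T)|=d$. At the first exit time the neuron still lies in the closed band inside the open cell, so the plain first-exit argument you wrote already closes.
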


The proof is given in Appendix~\ref{app:condensation}.
The theorem shows that a group can condense around a moving
minimum: sufficiently slow drift retains the neurons in the
attracting band, where their angular diameter decreases
exponentially. 

Together with Theorem~\ref{thm:energy}, this gives the connection
between amplitude stabilization and condensation.
Small amplitude variation and a decayed initial angular-energy
contribution yield a small drift bound.
Once this bound is below $2R_0\sin d$, the contraction theorem
applies to neurons then inside the band, as long as the stated
conditions persist. This describes how angular concentration
can continue during Stage III after amplitudes vary little.
The analogous result holds for exactly balanced negative
neurons near a smooth positive maximum.

\section{Discussion}
\label{sec:discussion}

Experiments with additional target functions in
Appendix~\ref{app:multiknot} show the same progression
from rapid fitting to late condensation.
Our analysis characterizes neuron condensation around attracting
extrema of the residual correlation field, but does not establish
that these directions coincide with those in the target representation.
The additional experiments also exhibit more condensation directions
than those in the target representation.
Appendix~\ref{app:zero-decay} compares large initialization with and without weight decay against small initialization
without weight decay.
Both large initialization with weight decay and small initialization
produce condensed solutions, but the former exhibits a distinct
rapid fitting phase before condensation.
Our dynamical analysis focuses on two-layer ReLU networks, while a corresponding dynamical theory for Transformers remains open.

\section{Conclusion}
\label{sec:conclusion}

Our work explains a mechanism for neuron condensation in two-layer
ReLU networks trained with weight decay under large initialization.
This mechanism differs from persistent lazy training at large
initialization and condensation at small initialization. Our experiments show that
rapid fitting precedes substantial directional alignment: the network
first fits the data, then develops a condensed representation during
continued training with weight decay.

By analyzing amplitude and angular dynamics, we characterize three
stages: rapid fitting, amplitude compression, and neuron condensation. 
During rapid fitting, the residual approaches a quasi-static equilibrium maintained by weight decay. 
In the early stage of amplitude compression, we show that the residual correlation field admits only finitely many
isolated attracting extrema and that early amplitude compression amplifies the field.
As neuron amplitudes stabilize, we prove that neuron directions condense around slowly moving attracting directions.
This analysis links weight decay to the formation of a condensed representation, clarifying its role beyond parameter norm reduction.

\newpage

\bibliographystyle{reference}
\bibliography{reference}
\clearpage
\appendix

\section{Neuron dynamics}
\label{app:dynamics}

\subsection{Neuron dynamics and balance}
\label{app:flow}
We derive Proposition~\ref{prop:flow} using the loss normalization
$\mathcal L=\norm {\bm{e}}_n^2$ in \eqref{eq:objective}.
Write ${\bm{u}}_k={\bm{u}}(\theta_k)$,
${\bm{u}}_{\perp,k}={\bm{u}}_\perp(\theta_k)=(-\sin\theta_k,\cos\theta_k)^\top$,
let $I_k$ be the activation cell containing $\theta_k$, and set
$\chi_{ki}=\mathbf1_{\{{\bm{v}}_k^\top\tilde {\bm{x}}_i>0\}}$ inside an open cell. Define
\[
 {\bm{\phi}}_k(t):={\bm{\phi}}_{\theta_k(t)}({\bm{X}}),\qquad
 {\bm{\psi}}_k(t):={\bm{\psi}}_{\theta_k(t)}({\bm{X}}).
\]
The derivatives of a training prediction are
\[
 \partial_{a_k}f_\Theta(x_i)=r_k({\bm{\phi}}_k)_i,\qquad
 \nabla_{{\bm{v}}_k}f_\Theta(x_i)=a_k\chi_{ki}\tilde {\bm{x}}_i.
\]
Consequently, with ${\bm{M}}_k=n^{-1}\sum_i e_i\chi_{ki}\tilde {\bm{x}}_i
={\bm{M}}_{I_k}({\bm{e}})$,
\begin{align}
 \partial_{a_k}\mathcal L
 &=\frac2n\sum_i e_i r_k({\bm{\phi}}_k)_i=2r_kq_k,\notag\\
 \nabla_{{\bm{v}}_k}\mathcal L&=\frac2n\sum_i e_i a_k\chi_{ki}\tilde {\bm{x}}_i
 =2a_k{\bm{M}}_k,\notag\\
 \ddt{a_{k}}&=-2r_kq_k-\lambda a_k,\qquad
 \ddt{{\bm{v}}_{k}}=-2a_k{\bm{M}}_k-\lambda {\bm{v}}_k.
 \label{eq:cartesianflow}
\end{align}
In particular, the two components of the last equation are
\[
 \ddt{w_{k}}=-\frac{2a_k}{n}\sum_i e_i\chi_{ki}x_i-\lambda w_k,
 \qquad
 \ddt{b_{k}}=-\frac{2a_k}{n}\sum_i e_i\chi_{ki}-\lambda b_k.
\]

\paragraph{Radial and angular projections.}
For $r_k>0$, differentiating ${\bm{v}}_k=r_k{\bm{u}}_k$ gives
$\ddt{{\bm{v}}_{k}}=\ddt{r_{k}}{\bm{u}}_k+r_k\ddt{\theta_{k}}{\bm{u}}_{\perp,k}$.
The orthonormality of ${\bm{u}}_k,{\bm{u}}_{\perp,k}$ and the identities
${\bm{u}}_k^\top {\bm{M}}_k=q_k$, ${\bm{u}}_{\perp,k}^\top {\bm{M}}_k=p_k$ imply
\begin{align}
 \ddt{r_{k}}&={\bm{u}}_k^\top\ddt{{\bm{v}}_{k}}=-2a_kq_k-\lambda r_k,\notag\\
 r_k\ddt{\theta_{k}}&={\bm{u}}_{\perp,k}^\top\ddt{{\bm{v}}_{k}}=-2a_kp_k.
 \label{eq:polarprojection}
\end{align}
Dividing the second equation by $r_k$ gives the angular equation in
\eqref{eq:radial}. The decay term has zero tangential projection
because ${\bm{u}}_{\perp,k}^\top {\bm{v}}_k=0$.

\paragraph{Balance and inactive neurons.}
The mixed data-gradient terms cancel exactly:
\begin{align}
 \ddt{B_{k}}
 &=2a_k(-2r_kq_k-\lambda a_k)
   -2r_k(-2a_kq_k-\lambda r_k)\notag\\
 &=-2\lambda(a_k^2-r_k^2)=-2\lambda B_k.
\end{align}
Multiplication by ${\mathrm{e}}^{2\lambda t}$ proves \eqref{eq:balance}.
It also gives the absolute scale bound
\[
 \bigl||a_k(t)|-r_k(t)\bigr|^2
 \le |a_k(t)^2-r_k(t)^2|
 ={\mathrm{e}}^{-2\lambda t}|B_k(0)|.
\]
If ${\bm{v}}_k(0)^\top\tilde {\bm{x}}_i<0$ for every $i$, all data gradients vanish.
The solution is $a_k(t)={\mathrm{e}}^{-\lambda t}a_k(0)$ and
${\bm{v}}_k(t)={\mathrm{e}}^{-\lambda t}{\bm{v}}_k(0)$. Every strict inactive inequality
therefore persists, $\theta_k$ stays fixed when defined, and
$A_k(t)={\mathrm{e}}^{-2\lambda t}A_k(0)$.

\paragraph{Residual dynamics.}
The parameter equation is $\ddt{\Theta}=-2J_t^\top {\bm{e}}/n-\lambda\Theta$.
For each sample, the contribution of neuron $k$ to $J_t\Theta$ is
\[
 a_k\partial_{a_k}f_\Theta(x_i)
 +{\bm{v}}_k^\top\nabla_{{\bm{v}}_k}f_\Theta(x_i)
 =2a_k\relu({\bm{v}}_k^\top\tilde {\bm{x}}_i).
\]
Thus $J_t\Theta=2f_\Theta({\bm{X}})$, and the chain rule gives
\begin{align}
 \ddt{{\bm{e}}}=J_t\ddt{\Theta}
 &=-\frac2nJ_tJ_t^\top {\bm{e}}-\lambda J_t\Theta\notag\\
 &=-2K_t{\bm{e}}-2\lambda f_\Theta({\bm{X}})
 =-2(K_t+\lambda I_n){\bm{e}}-2\lambda {\bm{y}}.
\end{align}
For fixed $K\succeq0$, setting the residual velocity to zero gives
${\bm{e}}_{\mathrm{eq}}(K)=-\lambda(K+\lambda I_n)^{-1}{\bm{y}}$.

At zero preactivation, use the same selection $\chi_{ki}\in[0,1]$
in the parameter flow and Jacobian, with $p_k$ denoting the selected
angular gradient. Along an absolutely continuous trajectory,
preactivation derivatives vanish almost everywhere on their zero sets;
the ReLU chain rule and the identities above therefore hold almost
everywhere, including boundary contacts. In particular,
\[
 \frac{\mathrm d}{\mathrm dt}\Phi(\Theta(t))
 =-\norm{\ddt{\Theta}}_2^2\le0.
\]

\subsection{General and balanced amplitude dynamics}
\label{app:balance}
For $r_k>0$, the quotient rule and \eqref{eq:radial} give
\begin{align}
 \ddt{\beta_{k}}
 &=\frac{\ddt{a_{k}}r_k-a_k\ddt{r_{k}}}{r_k^2}\notag\\
 &=\frac{(-2r_kq_k-\lambda a_k)r_k
       -a_k(-2a_kq_k-\lambda r_k)}{r_k^2}
 =2q_k(\beta_k^2-1).
\end{align}
On any interval where $a_k\ne0$, its sign $s_k$ is constant and
$A_k=|a_k|r_k>0$. The logarithmic derivative is
\begin{align}
 \frac{\ddt{A_{k}}}{A_k}
 &=\frac{\ddt{a_{k}}}{a_k}+\frac{\ddt{r_{k}}}{r_k}\notag\\
 &=-2\frac{r_k}{a_k}q_k-\lambda
   -2\frac{a_k}{r_k}q_k-\lambda
 =-2\lambda-2q_k(\beta_k+\beta_k^{-1}).
\end{align}
Equivalently, $\ddt{A_{k}}=-2\lambda A_k-2s_k(a_k^2+r_k^2)q_k$.
If $B_k(T)=0$, the balance identity preserves $|a_k|=r_k$;
for a nonzero neuron this means $\beta_k=s_k$ and $A_k=a_k^2=r_k^2$.
Substitution gives the balanced dynamics
\begin{equation}
 \ddt{A_k}=-2A_k(\lambda+2s_kq_k),\qquad
 \ddt{\theta_k}=-2s_kp_k.
 \label{eq:balanced}
\end{equation}
In this regime,
\[
 A_k(t)=A_k(T)\exp\!\left[-2\lambda(t-T)
                    -4s_k\int_T^t q_k(s)\,ds\right],
\]
so a positive amplitude remains positive on every finite interval with
bounded correlations.
The general equations retain the evolving ratio $\beta_k$;
under exact balance, $\beta_k=s_k$ gives the reduced system
\eqref{eq:balanced}. For surviving positive neurons with nearly stationary amplitudes and directions close to attracting minima, these dynamics give
$-2q_t(\theta_*)/\lambda\approx1$, as shown in Figure~\ref{fig:three_stages}(d).

\section{Fast residual equilibration}
\label{app:stage1}

\subsection{Proof of Theorem~\ref{thm:stage1}: initialization}
\label{app:initialkernel}
The proof has three steps: initialization estimates, short-time kernel
stability, and transfer to the residual scale.

Writing $\alpha_{ij}$ for the angle between $\tilde {\bm{x}}_i$ and
$\tilde {\bm{x}}_j$, the limiting kernel is
\begin{equation}
 \bar K_{ij}=
 \frac{\norm{\tilde {\bm{x}}_i}_2\norm{\tilde {\bm{x}}_j}_2}{2\pi n}
 \big[\sin\alpha_{ij}+2(\pi-\alpha_{ij})\cos\alpha_{ij}\big].
 \label{eq:limitkernel}
\end{equation}

Let $\sigma_m=m^{-\gamma}$, and write $a_k(0)=\sigma_m\xi_k$,
${\bm{v}}_k(0)=\sigma_m {\bm{g}}_k$ with independent standard Gaussian $\xi_k$ and
${\bm{g}}_k\in\R^2$. Each output summand has mean zero and variance
\[
 \mathbb E\big[a_k(0)^2\relu({\bm{v}}_k(0)^\top\tilde {\bm{x}}_i)^2\big]
 =\sigma_m^4\mathbb E\xi_k^2\,
   \mathbb E\relu({\bm{g}}_k^\top\tilde {\bm{x}}_i)^2
 =\frac{\sigma_m^4}{2}\norm{\tilde {\bm{x}}_i}_2^2.
\]
Independence across neurons gives
$\operatorname{Var}(f_{\Theta(0)}(x_i))=(1+x_i^2)m^{1-4\gamma}/2$.
For fixed $n,{\bm{y}}$ and $\gamma\ge1/4$, summing these second moments gives
$\norm{{\bm{e}}(0)}_2=O_p(1)$.

With $\chi_{ki}(t)=\mathbf1_{\{{\bm{v}}_k(t)^\top\tilde {\bm{x}}_i>0\}}$, the
empirical kernel entries are
\[
 (K_t)_{ij}=\frac1n\sum_k\left[
 \relu({\bm{v}}_k^\top\tilde {\bm{x}}_i)\relu({\bm{v}}_k^\top\tilde {\bm{x}}_j)
 +a_k^2\chi_{ki}\chi_{kj}\tilde {\bm{x}}_i^\top\tilde {\bm{x}}_j\right].
\]
To compute the expectation, rotate coordinates so that the directions
of $\tilde {\bm{x}}_i,\tilde {\bm{x}}_j$ have angles $0,\alpha$, where
$\alpha=\alpha_{ij}\in[0,\pi]$. Write a standard Gaussian as
${\bm{g}}=R_g(\cos\varphi,\sin\varphi)$, with $\varphi$ uniform on $[0,2\pi)$,
independent of $R_g$, and $\mathbb E R_g^2=2$. The common positive
half-planes overlap on $[\alpha-\pi/2,\pi/2]$, modulo $2\pi$. Hence
\begin{align}
 \mathbb E[\relu({\bm{g}}^\top\tilde {\bm{x}}_i)\relu({\bm{g}}^\top\tilde {\bm{x}}_j)]
 &=\frac{\norm{\tilde {\bm{x}}_i}_2\norm{\tilde {\bm{x}}_j}_2}{\pi}
   \int_{\alpha-\pi/2}^{\pi/2}\cos\varphi\cos(\varphi-\alpha)\,d\varphi
   \notag\\
 &=\frac{\norm{\tilde {\bm{x}}_i}_2\norm{\tilde {\bm{x}}_j}_2}{2\pi}
   [\sin\alpha+(\pi-\alpha)\cos\alpha],\notag\\
 \mathbb P({\bm{g}}^\top\tilde {\bm{x}}_i>0,\ {\bm{g}}^\top\tilde {\bm{x}}_j>0)
 &=\frac{\pi-\alpha}{2\pi}.
\end{align}
The integral follows from
$2\cos\varphi\cos(\varphi-\alpha)=\cos\alpha+\cos(2\varphi-\alpha)$.
Adding the second kernel term, using $\mathbb E\xi_k^2=1$ and
$\tilde {\bm{x}}_i^\top\tilde {\bm{x}}_j=\norm{\tilde {\bm{x}}_i}_2\norm{\tilde {\bm{x}}_j}_2\cos\alpha$,
proves \eqref{eq:limitkernel}.

Since $\rho_m=m\sigma_m^2$, the law of large numbers for the
independent kernel summands gives $K_0/\rho_m\to\bar K$ in
operator norm in probability, with $n$ fixed.
For positivity, a zero quadratic form of the random-feature contribution
would imply
\[
 \sum_i z_i\relu({\bm{g}}^\top\tilde {\bm{x}}_i)=0
\]
for almost every Gaussian ${\bm{g}}$, hence for every ${\bm{g}}$ by continuity.
At a nonzero point on ${\bm{g}}^\top\tilde {\bm{x}}_j=0$, the other terms are smooth
because the inputs are distinct. The gradient jump is therefore
$z_j\tilde {\bm{x}}_j=0$, so every $z_j=0$ and $\bar K\succ0$.

\subsection{Proof of Theorem~\ref{thm:stage1}: kernel stability}
\paragraph{Residual and parameter bounds.}
Let $h_m$ be a deterministic horizon with $h_m\sqrt{\log m}\to0$.
Positivity of $K_t$ gives
\[
 \frac12\frac{\mathrm d}{\mathrm dt}\norm {\bm{e}}_2^2
 =-2{\bm{e}}^\top K_t{\bm{e}}-2\lambda\norm {\bm{e}}_2^2-2\lambda {\bm{e}}^\top {\bm{y}}
 \le2\lambda\norm {\bm{e}}_2\norm {\bm{y}}_2.
\]
Thus $\mathrm d\norm {\bm{e}}_2/\mathrm dt\le2\lambda\norm {\bm{y}}_2$
(in the upper Dini sense at ${\bm{e}}=0$). Since $\norm{{\bm{e}}(0)}_2=O_p(1)$,
the residual is uniformly bounded in probability on $[0,h_m]$.
Localize to an arbitrarily high-probability event on which this bound
and the Gaussian initial-norm bounds below have fixed constants.
With $C_x=(n^{-1}\sum_i\norm{\tilde {\bm{x}}_i}_2^2)^{1/2}$,
\[
 \norm{{\bm{M}}_k}_2\le C_x\norm {\bm{e}}_n,
 \qquad |q_k|\le C_x\norm {\bm{e}}_n.
\]
For ${\bm{\eta}}_k=(a_k,{\bm{v}}_k)$, \eqref{eq:cartesianflow} consequently gives
$\norm{\ddt{{\bm{\eta}}_k}}_2\le C\norm{{\bm{\eta}}_k}_2$, with $C$ independent
of $k,m$. Gronwall's inequality yields
\begin{align*}
 \norm{{\bm{\eta}}_k(t)}_2&\le {\mathrm{e}}^{Ct}\norm{{\bm{\eta}}_k(0)}_2,\\
 \norm{{\bm{\eta}}_k(t)-{\bm{\eta}}_k(0)}_2
 &\le({\mathrm{e}}^{Ct}-1)\norm{{\bm{\eta}}_k(0)}_2.
\end{align*}
In particular,
\[
 \sup_{t\le h_m}\norm{{\bm{\eta}}_k(t)-{\bm{\eta}}_k(0)}_2
 \le Ch_m\sqrt{a_k(0)^2+r_k(0)^2}.
\]
The Gaussian initialization also gives
\[
 \max_k\norm{{\bm{\eta}}_k(0)}_2=O_p(m^{-\gamma}\sqrt{\log m}),
 \qquad
 \frac1m\sum_k m^{2\gamma}\norm{{\bm{\eta}}_k(0)}_2^2=O_p(1).
\]

\paragraph{Boundary contacts and kernel variation.}
A contact with the boundary of sample $i$ can occur only if
\[
 |{\bm{v}}_k(0)^\top\tilde {\bm{x}}_i|\le C m^{-\gamma}h_m\sqrt{\log m}
\]
on the localized event. Lipschitz continuity controls the feature
products and changes in $a_k^2$; changes in activation masks are
bounded by the initial squared output weights on these contact events.
Hence
\begin{align*}
 \sup_{t\le h_m}\frac{\norm{K_t-K_0}_{\op}}{\rho_m}
 &\le \frac{Ch_m}{m}\sum_k m^{2\gamma}(a_k(0)^2+r_k(0)^2)\\
 &\quad+\frac C m\sum_k m^{2\gamma}a_k(0)^2
 \sum_i\mathbf1_{\{|{\bm{v}}_k(0)^\top\tilde {\bm{x}}_i|
 \le C m^{-\gamma}h_m\sqrt{\log m}\}}.
\end{align*}
Independence of $a_k(0)$ and ${\bm{v}}_k(0)$ and the bounded Gaussian
density of ${\bm{g}}_k^\top\tilde {\bm{x}}_i$ imply
\begin{align*}
 \mathbb E\!\left[\xi_k^2
 \mathbf1_{\{|{\bm{g}}_k^\top\tilde {\bm{x}}_i|\le C h_m\sqrt{\log m}\}}\right]
 &=\mathbb E\xi_k^2\,
   \mathbb P(|{\bm{g}}_k^\top\tilde {\bm{x}}_i|\le C h_m\sqrt{\log m})\\
 &\le C' h_m\sqrt{\log m}.
\end{align*}
Since $n$ is fixed, Markov's inequality gives
\[
 \sup_{t\le h_m}\frac{\norm{K_t-K_0}_{\op}}{\rho_m}
 =O_p(h_m\sqrt{\log m}).
\]
Choosing
\[
 h_m=\frac{\rho_m^{-1}}{\lambda_{\min}(\bar K)}
       \log(\log m/\epsm)
\]
ensures $T_{\mathrm I}\le h_m$ with probability tending to one and
proves \eqref{eq:kernelstable}.

\subsection{Proof of Theorem~\ref{thm:stage1}: residual equilibration}
Recall ${\bm{e}}_\qs(0)=-\lambda(K_0+\lambda I_n)^{-1}{\bm{y}}$, and set
$\Delta_m=\sup_{t\le T_{\mathrm I}}\norm{K_t-K_0}_{\op}$.
Then $\Delta_m/\kappa_m=o_p(1)$ and
$\Delta_mT_{\mathrm I}=O_p(\rho_m^{-1}(\log m)^{5/2})=o_p(1)$.
For ${\bm{z}}_0(t)={\bm{e}}(t)-{\bm{e}}_\qs(0)$, subtract the frozen equilibrium relation
to obtain
\[
 \ddt{{\bm{z}}_{0}}=-2(K_t+\lambda I_n){\bm{z}}_0
           -2(K_t-K_0){\bm{e}}_\qs(0).
\]
On $\Delta_m<\kappa_m$, the homogeneous evolution operator $U(t,s)$
satisfies
$\norm{U(t,s)}_{\op}\le {\mathrm{e}}^{-2(\kappa_m+\lambda-\Delta_m)(t-s)}$.
This follows by differentiating the squared norm of a homogeneous
solution and using $K_t\succeq(\kappa_m-\Delta_m)I_n$.
Variation of constants gives
\[
 {\bm{z}}_0(t)=U(t,0){\bm{z}}_0(0)-2\int_0^tU(t,s)(K_s-K_0){\bm{e}}_\qs(0)\,ds.
\]
Bounding the integral by its exponential envelope yields
\begin{align}
 \norm{{\bm{z}}_0(T_{\mathrm I})}_2
 &\le {\mathrm{e}}^{-2(\kappa_m+\lambda-\Delta_m)T_{\mathrm I}}\norm{{\bm{z}}_0(0)}_2
 +\frac{\Delta_m}{\kappa_m+\lambda-\Delta_m}\norm{{\bm{e}}_\qs(0)}_2.
\end{align}
The first term is $O_p(\epsm/\log m){\mathrm{e}}^{2\Delta_mT_{\mathrm I}}$
and the second is $o_p(\epsm)$. Since
${\bm{e}}_\qs(0)=-\epsm\bar K^{-1}{\bm{y}}+o_p(\epsm)$, the first expansion follows.
The resolvent identity gives
\begin{align}
 {\bm{e}}_\qs(t)-{\bm{e}}_\qs(0)
 &=\lambda(K_t+\lambda I_n)^{-1}(K_t-K_0)
                  (K_0+\lambda I_n)^{-1}{\bm{y}},\notag\\
 \norm{{\bm{e}}_\qs(t)-{\bm{e}}_\qs(0)}_2
 &\le\frac{\lambda\Delta_m\norm {\bm{y}}_2}
 { (\kappa_m+\lambda-\Delta_m)(\kappa_m+\lambda)}
 =o_p(\epsm),\qquad t\le T_{\mathrm I}.
\end{align}
Here $\kappa_m/\rho_m\to\lambda_{\min}(\bar K)>0$ and
$\Delta_m/\rho_m=o_p(1)$. Combining this bound with the expansion of
${\bm{e}}(T_{\mathrm I})$ proves its claimed tracking error relative to
${\bm{e}}_\qs(T_{\mathrm I})$ and completes Theorem~\ref{thm:stage1}.

\clearpage
\section{Discrete extrema and amplification of $q_e$}
\label{app:geometry}

\subsection{Proof of Proposition~\ref{prop:geometry}}
\begin{proof}
Each equation $\tilde {\bm{x}}_i^\top {\bm{u}}(\theta)=0$ contributes two
antipodal boundary angles. Thus there are at most $2n$
boundaries and at most $2n$ open activation cells, each of
length at most $\pi$.

On a non-dead cell $I$, the assumption ${\bm{M}}_I({\bm{e}})\ne\bm{0}$ makes
$q_e(\theta)={\bm{M}}_I({\bm{e}})^\top {\bm{u}}(\theta)$ a nonconstant sinusoid.
Its critical points are spaced by $\pi$, so each open cell
contains at most one smooth extremum. A dead cell has
$q_e=0$ and no strict interior extremum. Consequently,
\[
 |\mathcal E_{\mathrm{int}}({\bm{e}})|\le2n,
 \qquad
 |\mathcal E_{\mathrm{bd}}({\bm{e}})|\le|\mathcal B|\le2n.
\]
Every strict local extremum is either inside a cell or at an
activation boundary, giving $|\mathcal E({\bm{e}})|\le4n$.

Finally, $q_e$ is continuous on the circle and nonzero on
some non-dead cell, so it attains a nonzero global maximum
or minimum. The points attaining this value cannot lie in
dead-cell interiors; they belong to the finite set of
interior critical points and activation boundaries.
Each such point is therefore isolated among the global
optimizers and is a strict local extremum.
Hence $|\mathcal E({\bm{e}})|\ge1$.
\end{proof}

Activation boundaries are fixed by the inputs. A boundary extremum
therefore has fixed position while it persists; for positive neurons,
the one-sided condition $p_t(\theta_b^-)<0<p_t(\theta_b^+)$ gives
attraction toward the boundary.

\subsection{Proof of Theorem~\ref{thm:earlycompression}}
\label{app:earlycompression}
\begin{proof}
Write $K_*=K_{t_0}$, $\kappa_*:=\lambda_{\min}(K_*)>0$,
$\tau=t-t_0$, and $\alpha(t)={\mathrm{e}}^{-2\lambda\tau}$.
For the kernel calculation, abbreviate
$q_k=q_t(\theta_k(t))$ and $p_k=p_t(\theta_k(t))$.

\paragraph{Kernel derivative.}
For each neuron, let $\chi_{ki}$ be its fixed activation indicator
on input $i$, and let
\[
 (S_k)_{ij}:=\chi_{ki}\chi_{kj}\tilde {\bm{x}}_i^\top\tilde {\bm{x}}_j.
\]
The polar identity
$\chi_{ki}\tilde {\bm{x}}_i=({\bm{\phi}}_k)_i{\bm{u}}_k+({\bm{\psi}}_k)_i{\bm{u}}_{\perp,k}$ gives
$S_k={\bm{\phi}}_k{\bm{\phi}}_k^\top+{\bm{\psi}}_k{\bm{\psi}}_k^\top$.
The matrix $S_k$ is constant while the mask is fixed. The contribution
of neuron $k$ to the kernel is therefore
\[
 K_k=\frac1n(r_k^2{\bm{\phi}}_k{\bm{\phi}}_k^\top+a_k^2S_k),\qquad
 K_t=\sum_k K_k.
\]
Using $\ddt{{\bm{\phi}}_k}={\bm{\psi}}_k\ddt{\theta_k}$ and the parameter equations,
we obtain
\begin{align}
 \ddt{K_k}&=-2\lambda K_k+E_k,\notag\\
 E_k&=-\frac{4a_kr_kq_k}{n}
                ({\bm{\phi}}_k{\bm{\phi}}_k^\top+S_k)
      -\frac{2a_kr_kp_k}{n}
                ({\bm{\psi}}_k{\bm{\phi}}_k^\top+{\bm{\phi}}_k{\bm{\psi}}_k^\top).
 \label{eq:appkernelderivative}
\end{align}
For any real vectors ${\bm{v}},{\bm{w}}$,
$-({\bm{v}}{\bm{v}}^\top+{\bm{w}}{\bm{w}}^\top)\preceq {\bm{v}}{\bm{w}}^\top+{\bm{w}}{\bm{v}}^\top
\preceq {\bm{v}}{\bm{v}}^\top+{\bm{w}}{\bm{w}}^\top$.
Also $|a_kr_k|\le Br_k^2$ and $|a_kr_k|\le Ba_k^2$ by the ratio
assumption. Thus the first term of $E_k$ is bounded above and below
by $\pm4B|q_k|K_k$, and the second by $\pm2B|p_k|K_k$.
Dead neurons contribute zero to $K_t$. Summing and using
$|q_k|+|p_k|\le\varepsilon\lambda$ gives
\begin{equation}
 \ddt{K_t}=-2\lambda K_t+E(t),\qquad
 -4B\varepsilon\lambda K_t\preceq E(t)
 \preceq4B\varepsilon\lambda K_t.
 \label{eq:appkernelcomparison}
\end{equation}

\paragraph{Kernel decay.}
Let $F(t)=\alpha(t)^{-1}K_t$. For each fixed vector ${\bm{v}}$,
\eqref{eq:appkernelcomparison} implies
\[
 -4B\varepsilon\lambda\,{\bm{v}}^\top F{\bm{v}}
 \le\frac{\mathrm d}{\mathrm dt}({\bm{v}}^\top F{\bm{v}})
 \le4B\varepsilon\lambda\,{\bm{v}}^\top F{\bm{v}}.
\]
Integrating these scalar inequalities yields
\begin{equation}
 {\mathrm{e}}^{-h(t)}K_*\preceq F(t)\preceq {\mathrm{e}}^{h(t)}K_*,\qquad
 h(t):=4B\varepsilon\lambda\tau\le4BL\varepsilon.
 \label{eq:appscaledkernel}
\end{equation}
Consequently,
\[
 \norm{F(t)-K_*}_{\op}
 \le({\mathrm{e}}^{h(t)}-1)\norm{K_*}_{\op}
 \le C_1\varepsilon\norm{K_*}_{\op},
\]
where $C_1$ depends only on $B,L$ for sufficiently small $\varepsilon$.
Multiplication by $\alpha(t)$ proves \eqref{eq:earlykerneldecay}.
In particular, the kernel remains positive definite on the interval.

\paragraph{Quasi-static scaling.}
Using the actual kernel just estimated, set
${\bm{e}}_\qs(t)=-\lambda(K_t+\lambda I_n)^{-1}{\bm{y}}$.
Let $A_0=K_*+\lambda I_n$ and
\[
 D(t)=F(t)-K_*+\lambda(\alpha(t)^{-1}-1)I_n.
\]
Then $K_t+\lambda I_n=\alpha(t)(A_0+D(t))$.
The inverse-matrix identity gives the exact comparison
\begin{equation}
 {\bm{e}}_\qs(t)-\alpha(t)^{-1}{\bm{e}}_\qs(t_0)
 =-\alpha(t)^{-1}(A_0+D(t))^{-1}D(t){\bm{e}}_\qs(t_0).
 \label{eq:appqscomparison}
\end{equation}
By \eqref{eq:appscaledkernel},
$\lambda_{\min}(A_0+D(t))\ge {\mathrm{e}}^{-h(t)}\kappa_*$.
Moreover, using the spectral assumptions and $\alpha^{-1}\le {\mathrm{e}}^{2L}$,
\[
 \frac{\norm{D(t)}_{\op}}{\kappa_*}
 \le \chi({\mathrm{e}}^{h(t)}-1)
       +\varepsilon({\mathrm{e}}^{2L}-1)
 \le C_2\varepsilon,
\]
where $C_2$ depends only on $B,L,\chi$.
Taking empirical norms in \eqref{eq:appqscomparison} therefore gives
\[
 \norm{{\bm{e}}_\qs(t)-\alpha(t)^{-1}{\bm{e}}_\qs(t_0)}_n
 \le C\varepsilon\alpha(t)^{-1}\norm{{\bm{e}}_\qs(t_0)}_n.
\]
For sufficiently small $\varepsilon$, this also gives the uniform comparison
\begin{equation}
 \tfrac12\alpha(t)^{-1}\norm{{\bm{e}}_\qs(t_0)}_n
 \le\norm{{\bm{e}}_\qs(t)}_n
 \le2\alpha(t)^{-1}\norm{{\bm{e}}_\qs(t_0)}_n.
 \label{eq:appqscomparability}
\end{equation}

\paragraph{Residual tracking.}
Set $\underline\kappa={\mathrm{e}}^{-2L-4BL\varepsilon}\kappa_*$.
By \eqref{eq:appscaledkernel}, throughout the interval,
\[
 \lambda_{\min}(K_t)\ge\underline\kappa,\qquad
 \frac{\norm{K_t}_{\op}}{\lambda_{\min}(K_t)}
 \le {\mathrm{e}}^{8BL\varepsilon}\chi,\qquad
 \frac{\lambda}{\underline\kappa}\le C\varepsilon.
\]
From~\eqref{eq:appkernelcomparison},
$\norm{\dot K_t}_{\op}\le(2+4B\varepsilon)\lambda\norm{K_t}_{\op}$.
Differentiating $(K_t+\lambda I_n){\bm{e}}_\qs(t)=-\lambda {\bm{y}}$ therefore yields
\[
 \dot {\bm{e}}_\qs=-(K_t+\lambda I_n)^{-1}\dot K_t {\bm{e}}_\qs,\qquad
 \norm{\dot {\bm{e}}_\qs}_n
 \le\frac{\norm{\dot K_t}_{\op}}{\lambda_{\min}(K_t)+\lambda}
       \norm{{\bm{e}}_\qs}_n
 \le C\lambda\norm{{\bm{e}}_\qs}_n.
\]
Let ${\bm{z}}={\bm{e}}-{\bm{e}}_\qs$. Subtracting the moving equilibrium from
\eqref{eq:residual} gives
\begin{equation}
 \dot {\bm{z}}=-2(K_t+\lambda I_n){\bm{z}}-\dot {\bm{e}}_\qs.
 \label{eq:trackingerror}
\end{equation}
The homogeneous evolution operator satisfies
$\norm{U(t,s)}_{\op}\le {\mathrm{e}}^{-2(\underline\kappa+\lambda)(t-s)}$.
Variation of constants therefore yields
\begin{equation}
 \norm{{\bm{z}}(t)}_n
 \le {\mathrm{e}}^{-2(\underline\kappa+\lambda)\tau}\norm{{\bm{z}}(t_0)}_n
 +\int_{t_0}^t {\mathrm{e}}^{-2(\underline\kappa+\lambda)(t-s)}
                \norm{\dot {\bm{e}}_\qs(s)}_n\,ds.
 \label{eq:zbound}
\end{equation}
Together with \eqref{eq:appqscomparability}, this gives
\begin{align*}
 \norm{{\bm{z}}(t)}_n
 &\le {\mathrm{e}}^{-2(\underline\kappa+\lambda)\tau}\norm{{\bm{z}}(t_0)}_n
 +C\lambda\norm{{\bm{e}}_\qs(t_0)}_n
    \int_{t_0}^t {\mathrm{e}}^{-2(\underline\kappa+\lambda)(t-s)}
                  {\mathrm{e}}^{2\lambda(s-t_0)}\,ds\\
 &\le {\mathrm{e}}^{-2(\underline\kappa+\lambda)\tau}\norm{{\bm{z}}(t_0)}_n
 +\frac{C\lambda}{\underline\kappa+2\lambda}
       \alpha(t)^{-1}\norm{{\bm{e}}_\qs(t_0)}_n.
\end{align*}
Dividing by the lower bound in \eqref{eq:appqscomparability} gives
\begin{equation}
 \frac{\norm{{\bm{z}}(t)}_n}{\norm{{\bm{e}}_\qs(t)}_n}
 \le2{\mathrm{e}}^{-2(\underline\kappa+2\lambda)\tau}\delta_0
      +\frac{C\lambda}{\underline\kappa+2\lambda}
 \le C(\delta_0+\varepsilon),
 \label{eq:appcompressiontracking}
\end{equation}
which proves \eqref{eq:compressiontracking} uniformly on $[t_0,t_1]$.
Thus the initial mismatch relaxes on the kernel time scale, while
motion of the reference on the decay time scale produces only a small lag.

\paragraph{Amplification of the actual field.}
Using the exact decomposition
\[
 {\bm{e}}(t)-\alpha(t)^{-1}{\bm{e}}(t_0)
 ={\bm{e}}_\qs(t)-\alpha(t)^{-1}{\bm{e}}_\qs(t_0)
       +{\bm{z}}(t)-\alpha(t)^{-1}{\bm{z}}(t_0),
\]
the quasi-static scaling estimate and the preceding bound on ${\bm{z}}$ give
\[
 \norm{{\bm{e}}(t)-\alpha(t)^{-1}{\bm{e}}(t_0)}_n
 \le C(\varepsilon+\delta_0)\alpha(t)^{-1}
                    \norm{{\bm{e}}_\qs(t_0)}_n.
\]
Cauchy--Schwarz and the definition of $C_\phi$ now prove
\eqref{eq:earlyfieldstretch} for $q_t=q_{e(t)}$.
The term $\delta_0$ accounts for the initial tracking error;
Theorem~\ref{thm:stage1} gives $\delta_0=o_p(1)$ for ${\bm{y}}\ne\bm{0}$.
\end{proof}

\clearpage
\section{Slow drift of extrema and neuron condensation}
\label{app:energy}

\subsection{Amplitude budget and angular energy}
\label{app:budget}
Under the hypotheses of Theorem~\ref{thm:energy}, $\mathcal N(t)$
is constant for $t\ge T$; write $\mathcal N$ for this set.
Exact balance and energy dissipation give
\begin{equation}
 \lambda\Acal(t)
 =\frac{\lambda}{2}\sum_{k\in\mathcal N}(a_k(t)^2+r_k(t)^2)
 \le\Phi(\Theta(t))\le\Phi(\Theta(T)),\qquad
 t\ge T.
 \label{eq:amplitudebudget}
\end{equation}

Hence $\Acal(t)\le\overline{\Acal}_T$.

\subsection{Proof of Theorem~\ref{thm:energy}}
\label{app:driftproof}
\begin{proof}

Fix any finite $t>T$. On $[T,t]$, write $\mathcal N=\mathcal N(t)$,
which is constant. Under the theorem's balance assumption,
\[
 \ddt{{\bm{e}}}={\bm{v}}_A+{\bm{v}}_\theta,\qquad
 {\bm{v}}_A=\sum_{k\in\mathcal N}s_k\ddt{A_{k}}{\bm{\phi}}_k,\qquad
 {\bm{v}}_\theta=-2\sum_{k\in\mathcal N}A_kp_k{\bm{\psi}}_k.
\]
Here ${\bm{v}}_A,{\bm{v}}_\theta\in\R^n$ are output-velocity vectors, whereas
$\VA$ is the scalar sum of absolute amplitude derivatives.
To see the first identity, differentiate the entire output sum:
\[
 \ddt{{\bm{e}}}=\sum_{k\in\mathcal N}s_k
       (\ddt{A_{k}}{\bm{\phi}}_k+A_k\ddt{\theta_{k}}{\bm{\psi}}_k).
\]
Strictly dead neurons have zero output throughout this interval.
Substituting $\ddt{\theta_{k}}=-2s_kp_k$ cancels the two factors of $s_k$
and gives ${\bm{v}}_\theta$. Within a fixed cell,
$\partial_\theta{\bm{\psi}}_\theta({\bm{X}})=-{\bm{\phi}}_\theta({\bm{X}})$, so the full chain rule is
\begin{align}
 \ddt{p_{k}}
 &=\ip{\ddt{{\bm{e}}}}{{\bm{\psi}}_k}
   +\ip {\bm{e}}{\partial_\theta{\bm{\psi}}_{\theta_k}({\bm{X}})}\ddt{\theta_{k}}\notag\\
 &=\ip{\ddt{{\bm{e}}}}{{\bm{\psi}}_k}-q_k\ddt{\theta_{k}}
 =\ip{\ddt{{\bm{e}}}}{{\bm{\psi}}_k}+2s_kq_kp_k.
\end{align}
In the derivative of $P$, the amplitude term
$-4s_kA_kq_kp_k^2$ therefore cancels the angular chain-rule term
$+4s_kA_kq_kp_k^2$. Summing gives
\begin{align}
 \ddt{P}&=\sum_{k\in\mathcal N}(\ddt{A_{k}}p_k^2+2A_kp_k\ddt{p_{k}})\notag\\
 &=-2\lambda P+2\ip{\ddt{{\bm{e}}}}{\sum_{k\in\mathcal N} A_kp_k{\bm{\psi}}_k}\notag\\
 &=-2\lambda P-\norm{{\bm{v}}_\theta}_n^2-\ip{{\bm{v}}_A}{{\bm{v}}_\theta}\notag\\
 &=-2\lambda P-\norm{{\bm{v}}_\theta+{\bm{v}}_A/2}_n^2+\tfrac14\norm{{\bm{v}}_A}_n^2
 \le-2\lambda P+\tfrac14 C_\phi^2\VA^2.
\end{align}
Here the triangle inequality gives
$\norm{{\bm{v}}_A}_n\le\sum_k\left|\ddt{A_{k}}\right|\norm{{\bm{\phi}}_k}_n\le C_\phi\VA$.
Multiplying the differential inequality by ${\mathrm{e}}^{2\lambda(t-T)}$ and
integrating yields
\[
 P(t)\le {\mathrm{e}}^{-2\lambda(t-T)}P(T)
 +\frac{C_\phi^2}{4}\int_T^t {\mathrm{e}}^{-2\lambda(t-s)}\VA(s)^2\,ds,
\]
which is \eqref{eq:Pbound}. Weighted Cauchy--Schwarz gives the remaining
velocity estimate explicitly:
\begin{align}
 \norm{{\bm{v}}_\theta}_n
 &\le2\sum_k A_k|p_k|\norm{{\bm{\psi}}_k}_n
 \le2C_\psi\sum_k\sqrt{A_k}\sqrt{A_kp_k^2}\notag\\
 &\le2C_\psi\left(\sum_k A_k\right)^{1/2}
               \left(\sum_k A_kp_k^2\right)^{1/2}
 =2C_\psi\sqrt{\Acal P}.
\end{align}
Combining $\norm{\ddt{{\bm{e}}}}_n\le\norm{{\bm{v}}_A}_n+\norm{{\bm{v}}_\theta}_n$ with
\eqref{eq:amplitudebudget} proves \eqref{eq:debound}.

At a smooth minimum, $q_t''(\theta_*)=R>0$ makes its branch locally
differentiable by the implicit function theorem. Differentiating
$p_t(\theta_*(t))=0$ yields
\[
 0=\ip{\ddt{{\bm{e}}}}{{\bm{\psi}}_{\theta_*}({\bm{X}})}+q_t''(\theta_*)\ddt{\theta_{*}}.
\]
Solving for its velocity and applying Cauchy--Schwarz gives
\[
 \ddt{\theta_{*}}=-\frac{\ip{\ddt{{\bm{e}}}}{{\bm{\psi}}_{\theta_*}({\bm{X}})}}{R},\qquad
 \left|\ddt{\theta_{*}}\right|\le\frac{C_\psi}{R_0}\norm{\ddt{{\bm{e}}}}_n,
\]
Combining this bound with \eqref{eq:debound} proves
\eqref{eq:starspeed}.
\end{proof}

\subsection{Consequences of small amplitude variation}
If $\VA(t)\le\nu_A$ for all $t\ge T$, integrating the exponential kernel
in \eqref{eq:Pbound} gives
\begin{equation}
 P(t)\le {\mathrm{e}}^{-2\lambda(t-T)}P(T)
 +\frac{C_\phi^2\nu_A^2}{8\lambda}
       (1-{\mathrm{e}}^{-2\lambda(t-T)})=:H_{\nu_A}(t)^2.
 \label{eq:finiteenergy}
\end{equation}
Combining this with \eqref{eq:debound} and \eqref{eq:starspeed} gives
\begin{equation}
 \left|\ddt{\theta_*(t)}\right|
 \le\frac{C_\psi}{R_0}
 \left[C_\phi\nu_A+2C_\psi\sqrt{\overline{\Acal}_T}
 H_{\nu_A}(t)\right].
 \label{eq:energyfinite}
\end{equation}
Under the hypotheses of Theorem~\ref{thm:energy}, if
$\VA(t)\to0$, its energy bound tends to zero. Indeed, for any
$\eta>0$, choose $S\ge T$ so that $\VA(s)^2\le\eta$ for $s\ge S$.
For $t\ge S$ the convolution is bounded by
\[
 {\mathrm{e}}^{-2\lambda(t-S)}\int_T^S {\mathrm{e}}^{-2\lambda(S-s)}\VA(s)^2\,ds
 +\frac{\eta}{2\lambda}.
\]
Letting $t\to\infty$ and then $\eta\downarrow0$ proves $P(t)\to0$.
The residual and minimum velocities then tend to zero by
\eqref{eq:debound} and \eqref{eq:starspeed}.
A uniform bound $\VA\le\nu_A$ alone instead gives
$\limsup_{t\to\infty}P(t)\le C_\phi^2\nu_A^2/(8\lambda)$.

\subsection{Angular tracking}
\label{app:individual}
\begin{lemma}[Angular tracking]
\label{lem:individual}
Consider an exactly balanced positive-output neuron $a_k=r_k>0$
for all $t\ge T$. Let $\theta_*(t)$ be a differentiable smooth
negative-minimum branch with depth $R(t)\ge R_0>0$ for all $t\ge T$.
Suppose the closed
band $[\theta_*(t)-d,\theta_*(t)+d]$, with $0<d<\pi/2$, lies inside
one fixed open activation cell for all $t\ge T$. If
\[
 |\delta_k(T)|\le d,\qquad
 \sup_{t\ge T}\left|\ddt{\theta_{*}(t)}\right|\le\omega<2R_0\sin d,
 \qquad \delta_k(t)=\theta_k(t)-\theta_*(t),
\]
then the neuron remains in this band. With $c_d=2R_0\cos d>0$,
\begin{align}
 |\delta_k(t)|
 &\le {\mathrm{e}}^{-c_d(t-T)}|\delta_k(T)|
 +\int_T^t {\mathrm{e}}^{-c_d(t-s)}\left|\frac{\mathrm d \theta_{*}(s)}{\mathrm ds}\right|\,ds\label{eq:individualtracking}\\
 &\le {\mathrm{e}}^{-c_d(t-T)}|\delta_k(T)|
 +\frac{\omega}{c_d}\bigl(1-{\mathrm{e}}^{-c_d(t-T)}\bigr).
 \label{eq:individualtube}
\end{align}
\end{lemma}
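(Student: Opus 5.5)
The plan is to derive a scalar differential equation for $\delta_k$ and close it with a comparison (Grönwall-type) argument. Inside the fixed open activation cell, \eqref{eq:sinusoid} gives $q_t(\theta)=R(t)\,\cdot(-\cos(\theta-\theta_*(t)))$ on the whole cell. To see this, note that $\theta_*$ is a smooth minimum of the sinusoid ${\bm{M}}_I({\bm{e}}(t))^\top{\bm{u}}(\theta)$, so ${\bm{M}}_I=-R\,{\bm{u}}(\theta_*)$. Hence $p_t(\theta_k)=R(t)\sin\delta_k$ whenever $\theta_k$ lies in the band. Exact balance $a_k=r_k>0$ gives $\beta_k=1$, so the balanced angular equation \eqref{eq:balanced} reads $\dot\theta_k=-2p_k$. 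Therefore
\[
 \ddt{\delta_k}=-2R(t)\sin\delta_k-\ddt{\theta_*},
\]
which is the displayed equation before Theorem~\ref{thm:earlycompression} with $\beta_k=1$.

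\textbf{Invariance of the band.} This is the main obstacle: the sinusoidal formula is only valid while the neuron stays in the cell, so invariance must be established first by a barrier argument. Suppose the neuron leaves, and let $t^\ast=\inf\{t\ge T:|\delta_k(t)|>d\}<\infty$. By continuity $|\delta_k(t^\ast)|=d$, and on $[T,t^\ast]$ the equation above holds. If $\delta_k(t^\ast)=d$, then $\dot\delta_k(t^\ast)\le-2R_0\sin d+\omega<0$. By continuity $\dot\delta_k<0$ on a neighborhood, so $\delta_k<d$ just after $t^\ast$ while staying in the cell, contradicting the definition of $t^\ast$. The case $\delta_k(t^\ast)=-d$ is symmetric. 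Closedness of the band inside the open cell ensures the formula holds in a small neighborhood. The strict inequality $\omega<2R_0\sin d$ is used exactly here.

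\textbf{Tracking bound.} With invariance in hand, for $|\delta|\le d<\pi/2$ we have $\sin\delta\cdot\operatorname{sgn}\delta=|\sin\delta|\ge\cos d\,|\delta|$. This holds because $\sin x/x$ is decreasing on $(0,\pi/2)$ and $\sin d/d\ge\cos d$; more directly, the mean value theorem gives $\sin\delta=\cos\xi\cdot\delta$ with $|\xi|\le d$. Write $\sin\delta_k=g(t)\delta_k$ with $g(t)\ge\cos d$. Then
\[
 \dot\delta_k=-2R(t)g(t)\delta_k-\dot\theta_*,\qquad 2Rg\ge c_d.
\]
Consider $|\delta_k|$ in the upper Dini derivative sense: $D^+|\delta_k|\le-c_d|\delta_k|+|\dot\theta_*|$. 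This covers zero crossings, where $D^+|\delta_k|\le|\dot\theta_*|$. Multiplying by ${\mathrm{e}}^{c_d(t-T)}$ and integrating gives \eqref{eq:individualtracking}. Bounding $|\dot\theta_*|\le\omega$ and evaluating $\int_T^t{\mathrm{e}}^{-c_d(t-s)}ds=(1-{\mathrm{e}}^{-c_d(t-T)})/c_d$ yields \eqref{eq:individualtube}.

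For the later Theorem~\ref{thm:condensation}, the same mean-value trick applies to differences. Since $\dot\theta_i-\dot\theta_j=-2R(\sin\delta_i-\sin\delta_j)$, the drift term cancels, and $\sin\delta_i-\sin\delta_j=\cos\xi\,(\delta_i-\delta_j)$. This gives exponential contraction of the diameter without any drift term, but it is not needed for the lemma itself.
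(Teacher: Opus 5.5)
Your proposal is correct and follows essentially the same route as the paper's proof. The paper also uses the cell representation $q_t(\theta)=-R(t)\cos(\theta-\theta_*(t))$ to get $\dot\delta_k=-2R\sin\delta_k-\dot\theta_*$, a first-exit argument with $\omega<2R_0\sin d$ for band invariance, and the bound $\operatorname{sgn}(\delta)\sin\delta\ge\cos d\,|\delta|$ to reach $D^+|\delta_k|\le -c_d|\delta_k|+|\dot\theta_*|$, which it integrates against ${\mathrm{e}}^{c_d(t-T)}$.

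One small correction to the barrier step: you do not need $\dot\delta_k<0$ on a whole neighborhood, which would require continuity of $\dot\theta_*$ and that is not assumed. The single strict inequality $\dot\delta_k(t^\ast)<0$ already forces $\delta_k<d$ immediately after $t^\ast$.
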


\begin{proof}
The cell representation $q_t(\theta)=-R(t)\cos(\theta-\theta_*(t))$
gives $p_t(\theta)=R(t)\sin(\theta-\theta_*(t))$. For an exactly
balanced positive-output neuron, differentiating $\delta_k=\theta_k-\theta_*$
therefore gives
\[
 \ddt{\delta_{k}}=-2R(t)\sin\delta_k-\ddt{\theta_{*}}.
\]
At $\delta_k=d$ the relative velocity is at most
$-2R_0\sin d+\omega<0$; at $\delta_k=-d$ it is at least
$2R_0\sin d-\omega>0$. A first-exit argument therefore makes the band
forward invariant for all $t\ge T$. For $|\delta|\le d$,
$\operatorname{sgn}(\delta)\sin\delta\ge\cos d\,|\delta|$.
The upper right Dini derivative therefore satisfies
\[
 D^+|\delta_k|\le-c_d|\delta_k|+\left|\ddt{\theta_{*}}\right|,
\]
including at $\delta_k=0$. Multiplication by
${\mathrm{e}}^{c_d(t-T)}$ and integration proves
\eqref{eq:individualtracking}--\eqref{eq:individualtube}.
\end{proof}

\subsection{Pairwise angular contraction}
\label{app:pairwise}
Recall the angular diameter in \eqref{eq:angulardiameter}.
The following estimate proves \eqref{eq:condensation} once
Lemma~\ref{lem:individual} supplies the invariant band.

\begin{lemma}[Pairwise contraction]
\label{lem:pairwise}
Let $G$ be a fixed group of exactly balanced positive-output neurons,
$a_k=r_k>0$ for all $t\ge T$. Suppose that, for all $t\ge T$,
a smooth negative-minimum branch $\theta_*(t)$ has depth
$R(t)\ge R_0>0$ and all group directions remain in a common band
\[
 |\theta_k(t)-\theta_*(t)|\le d<\pi/2
\]
contained in one open activation cell. Use a common continuous angular
coordinate on this band. Then for $i,j\in G$ and $t\ge T$,
\begin{equation}
 |\theta_i(t)-\theta_j(t)|
 \le {\mathrm{e}}^{-2R_0\cos(d)(t-T)}|\theta_i(T)-\theta_j(T)|.
 \label{eq:pairwisecontract}
\end{equation}
Consequently, $\operatorname{diam}_{\theta}G(t)$ obeys the same contraction
bound and tends to zero, regardless of whether the well converges
to a fixed angle.
\end{lemma}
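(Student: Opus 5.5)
We prove the pairwise contraction \eqref{eq:pairwisecontract} by working directly with the angular equations of the two neurons, not with their offsets from $\theta_*$. The drift term $\ddt{\theta_*}$ is common to both offsets and cancels in the difference. Only the restoring field matters, and it is the same sinusoid $p_t$ for both neurons. This also explains why no hypothesis on the drift speed is needed here: the invariant band is assumed, and in \cref{thm:condensation} it is supplied by Lemma~\ref{lem:individual}.

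\textbf{Setup.} Fix $i,j\in G$ and use the common continuous angular coordinate on the band. Since $a_k=r_k>0$, the balanced dynamics \eqref{eq:balanced} give $\ddt{\theta_k}=-2p_t(\theta_k)$. Inside the fixed open cell the sinusoidal form \eqref{eq:sinusoid} reads $q_t(\theta)=-R(t)\cos(\theta-\theta_*(t))$, hence
\[
 p_t(\theta)=R(t)\sin(\theta-\theta_*(t)).
\]
Writing $\delta_k=\theta_k-\theta_*$ and $\Delta=\theta_i-\theta_j=\delta_i-\delta_j$, we get
\[
 \ddt{\Delta}=-2R(t)\bigl(\sin\delta_i-\sin\delta_j\bigr).
\]

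\textbf{Key estimate.} By the mean value theorem, $\sin\delta_i-\sin\delta_j=\cos\xi\,(\delta_i-\delta_j)$ for some $\xi$ between $\delta_i$ and $\delta_j$. Since $|\delta_i|,|\delta_j|\le d<\pi/2$ for all $t\ge T$, we have $|\xi|\le d$ and therefore $\cos\xi\ge\cos d>0$. Thus
\[
 \ddt{\Delta}=-2R(t)\cos\xi(t)\,\Delta,\qquad 2R(t)\cos\xi(t)\ge 2R_0\cos d=c_d.
\]
It is cleanest to write this as the linear equation $\dot\Delta=-g(t)\Delta$ with $g(t)\ge c_d$. Here $g=2R(\sin\delta_i-\sin\delta_j)/(\delta_i-\delta_j)$ when $\Delta\ne0$, and $g=2R\cos\delta_i$ otherwise. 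Then
\[
 \Delta(t)=\Delta(T)\exp\!\Bigl(-\int_T^t g\Bigr),
\]
which gives $|\Delta(t)|\le \mathrm{e}^{-c_d(t-T)}|\Delta(T)|$, i.e.\ \eqref{eq:pairwisecontract}. As a side remark, this also shows that two coinciding neurons stay together by uniqueness. Alternatively, one can differentiate $\Delta^2$ and apply Gronwall.

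\textbf{Diameter and main point of care.} Taking the maximum over pairs gives the bound for $\operatorname{diam}_\theta G$. Because $G$ is finite, the diameter tends to zero, whether or not $\theta_*$ converges. The only delicate issue is the regularity of the coefficients. We must check that $\theta_k$ is absolutely continuous, that $R$ is measurable and bounded below by $R_0$, and that the mean value point stays in $[-d,d]$. All of these follow from the standing hypotheses: the trajectories stay in one open cell, where $q_t$ is smooth in $\theta$, so no mask changes occur. Measurability of $g$ holds because $g$ is a continuous function of $R,\delta_i,\delta_j$.
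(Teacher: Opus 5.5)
Your proposal is correct and matches the paper's proof. The paper writes $\dot\Delta_{ij}=-2R(t)\,c_{ij}(t)\,\Delta_{ij}$, where $c_{ij}=\int_0^1\cos\bigl(\delta_j+s(\delta_i-\delta_j)\bigr)\,ds\ge\cos d$ is the same divided difference as your $g/(2R)$. It then integrates this linear equation and takes the maximum over the finitely many pairs, exactly as you do.
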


Taking the difference of two angular equations gives
\begin{equation}
 \frac{d}{dt}(\theta_i-\theta_j)
 =-2R\bigl(\sin\delta_i-\sin\delta_j\bigr).
 \label{eq:pairwisedynamics}
\end{equation}
\begin{proof}

Set $\Delta_{ij}=\theta_i-\theta_j=\delta_i-\delta_j$.
The divided difference
\[
 c_{ij}(t)=\int_0^1
 \cos\bigl(\delta_j(t)+s(\delta_i(t)-\delta_j(t))\bigr)\,ds
\]
is well defined even when the two angles coincide. Since both offsets
lie in $[-d,d]$, it satisfies $c_{ij}\ge\cos d>0$.
\Eqref{eq:pairwisedynamics} is therefore
$\ddt{\Delta_{ij}}=-2R(t)c_{ij}(t)\Delta_{ij}$.
Integration proves \eqref{eq:pairwisecontract}; taking the maximum over
the finitely many pairs gives the diameter bound.

\end{proof}

\subsection{Proof of Theorem~\ref{thm:condensation}}
\label{app:condensation}
\begin{proof}
Set
\[
 \omega:=\sup_{t\ge T}
              \left|\ddt{\theta_*(t)}\right|<2R_0\sin d.
\]
Lemma~\ref{lem:individual}, applied to every $k\in G$, shows that
all group directions remain in the stated band.
Lemma~\ref{lem:pairwise} then gives \eqref{eq:condensation}.
The positive rate $c_d=2R_0\cos d$ therefore implies that the
diameter tends to zero as $t\to\infty$.

If additionally $\ddt{\theta_*}(t)\to0$, the tracking estimate
\eqref{eq:individualtracking} implies
$|\theta_k(t)-\theta_*(t)|\to0$ for each $k\in G$.
To see this, split its convolution at a fixed late time $S$.
The part over $[T,S]$ decays exponentially, whereas the part over
$[S,t]$ is at most $c_d^{-1}\sup_{s\ge S}|\ddt{\theta_*}(s)|$.
Letting $t\to\infty$ and then $S\to\infty$ proves the claim.
For negative neurons near a positive maximum,
$q_t=R\cos\delta$ gives $\ddt{\theta_k}=2p_k=-2R\sin\delta_k$,
so the same argument applies.
\end{proof}

\clearpage

\section{Experimental details for the main condensation experiment}
\label{app:controls}
\label{app:trajectory}

\subsection{Target and training data}
The four-point experiment in Figures~\ref{fig:overview},
\ref{fig:three_stages}, and \ref{fig:stage1} fits the target
\begin{equation}
 f_\star(x)=0.2\relu(x-1/3)+0.2\relu(-x-1/3),\qquad x\in[-1,1],
 \label{eq:target}
\end{equation}
using noiseless samples
\[
 {\bm{X}}=(-1,-1/3,1/3,1)^\top,\qquad {\bm{y}}=(2/15,0,0,2/15)^\top.
\]
Test MSE is evaluated on 10,000 equally spaced points in $[-1,1]$.

\subsection{Network and optimization}
The network in \eqref{eq:model} has $m=200$ hidden neurons. All parameters are initialized
independently as $a_k,w_k,b_k\sim\mathcal N(0,m^{-2\gamma})$ with
$\gamma=0.4$. Training uses 80,000 full-batch SGD updates,
learning rate $\eta=0.1$, and weight decay $\lambda=10^{-3}$:
\[
 \Theta_{E+1}=(1-\eta\lambda)\Theta_E
                  -\eta\nabla_\Theta\mathcal L(\Theta_E).
\]

Figure~\ref{fig:stage1} shows the evolution of the NTK eigenvalues
throughout training. The large initialization kernel estimate
underlying Theorem~\ref{thm:stage1}, established in
Appendix~\ref{app:initialkernel}, gives
$K_0/m^{1-2\gamma}\to\bar K\succ0$ in probability.
Thus, for $\gamma<1/2$ and sufficiently large width, all initial
NTK eigenvalues are much larger than the fixed weight decay
coefficient $\lambda$.

Following rapid fitting, their early decline is consistent with
Theorem~\ref{thm:earlycompression}, which predicts
$K_t\approx {\mathrm{e}}^{-2\lambda(t-T_{\mathrm I})}K_{T_{\mathrm I}}$
during early amplitude compression under its stated assumptions.
The eigenvalues subsequently stabilize in Stage III and remain
above $\lambda$ throughout the observed trajectory.

\begin{figure}[!htbp]
 \centering\includegraphics[width=0.8\linewidth]{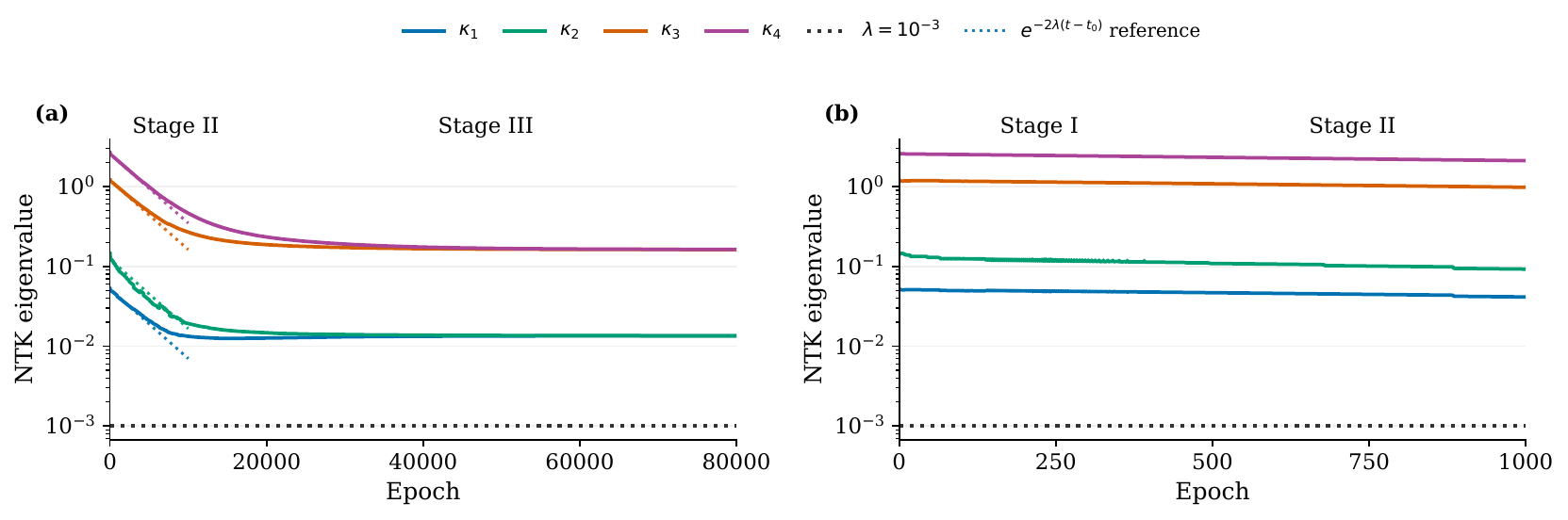}
    \caption{NTK spectrum under large initialization.
    (a) Full training trajectory: all four eigenvalues remain above
    $\lambda=10^{-3}$, decrease during Stage II, and approach
    plateaus during Stage III.
    (b) Zoom into the first 1,000 epochs, showing the comparatively
    small changes in the eigenvalues during initial fitting.
    The horizontal dotted line indicates $\lambda=10^{-3}$.}
 \label{fig:stage1}
\end{figure}

\clearpage
\section{Additional ReLU condensation experiments}
\label{app:numerics}

\subsection{Initialization scale and weight decay}
\label{app:zero-decay}

\begin{figure}[!htbp]
 \centering\includegraphics[width=\linewidth]{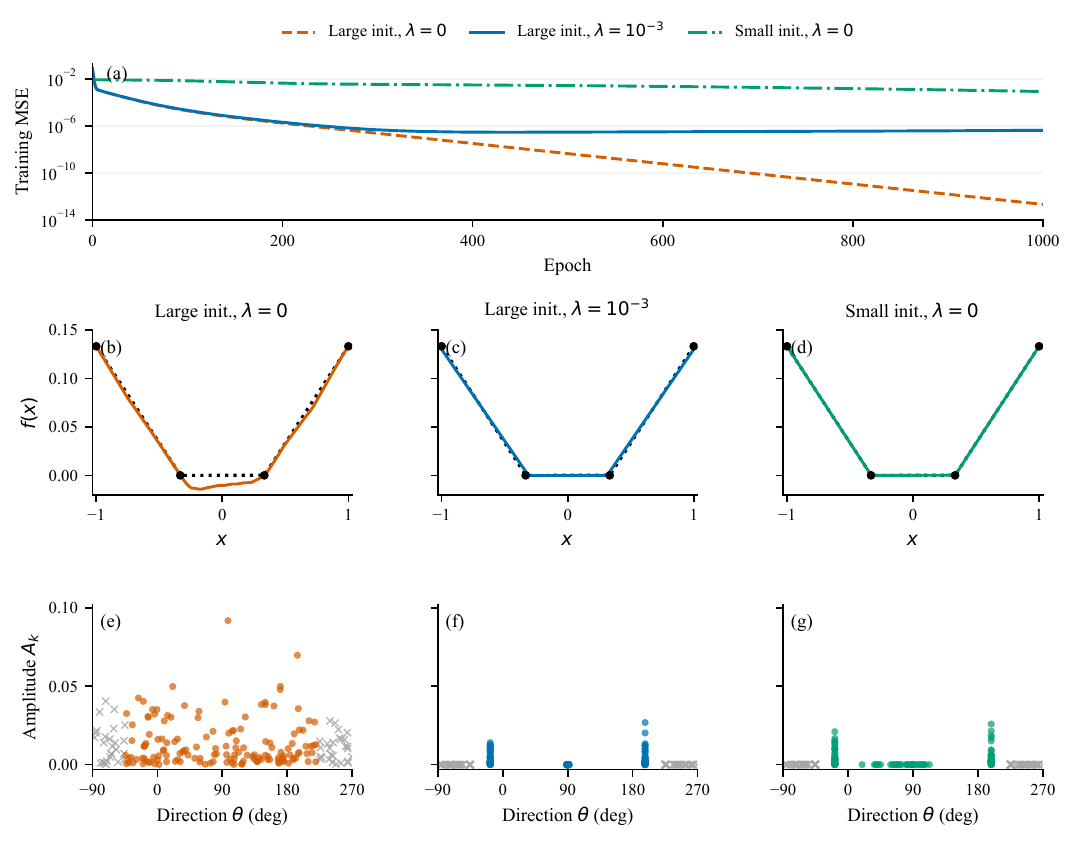}
 \caption{Fitting speed and final condensation at different initialization scales.
 (a) Training MSE during the first 1,000 updates.
 (b--d) Outputs at epoch 80,000, with targets (dotted) and training samples
 (black dots). (e--g) Final directions and unsigned amplitudes $A_k=|a_k|r_k$.
 Columns compare large initialization with $\lambda=0$ and $10^{-3}$,
 and small initialization with $\lambda=0$.
 Colored circles show non-dead neurons; gray crosses show neurons
 inactive on every training input.}
 \label{fig:controls}
\end{figure}

Figure~\ref{fig:controls} compares large initialization
($\gamma=0.4$, $\lambda=0$ or $10^{-3}$) with small initialization
($\gamma=1$, $\lambda=0$) on the four-point target.
Both large initialization with weight decay and small initialization
without decay learn condensed representations with two dominant
angular groups. Their fitting dynamics differ: large initialization
reaches a low training loss within a few hundred updates, well before
late condensation, whereas small initialization fits more slowly.

In the large initialization run with $\lambda=10^{-3}$, all non-dead
neurons become condensed, including a low-amplitude group
near $90^\circ$. Dead neurons remain dispersed while their amplitudes
vanish. With small initialization and no decay, some low-amplitude
non-dead neurons remain dispersed despite the two dominant groups.
Large initialization without decay fits rapidly,but neurons don't develop a condensed representation.

\subsection{Targets with three ReLU breakpoints}
\label{app:multiknot}

\begin{figure}[!htbp]
 \centering\includegraphics[width=\linewidth]{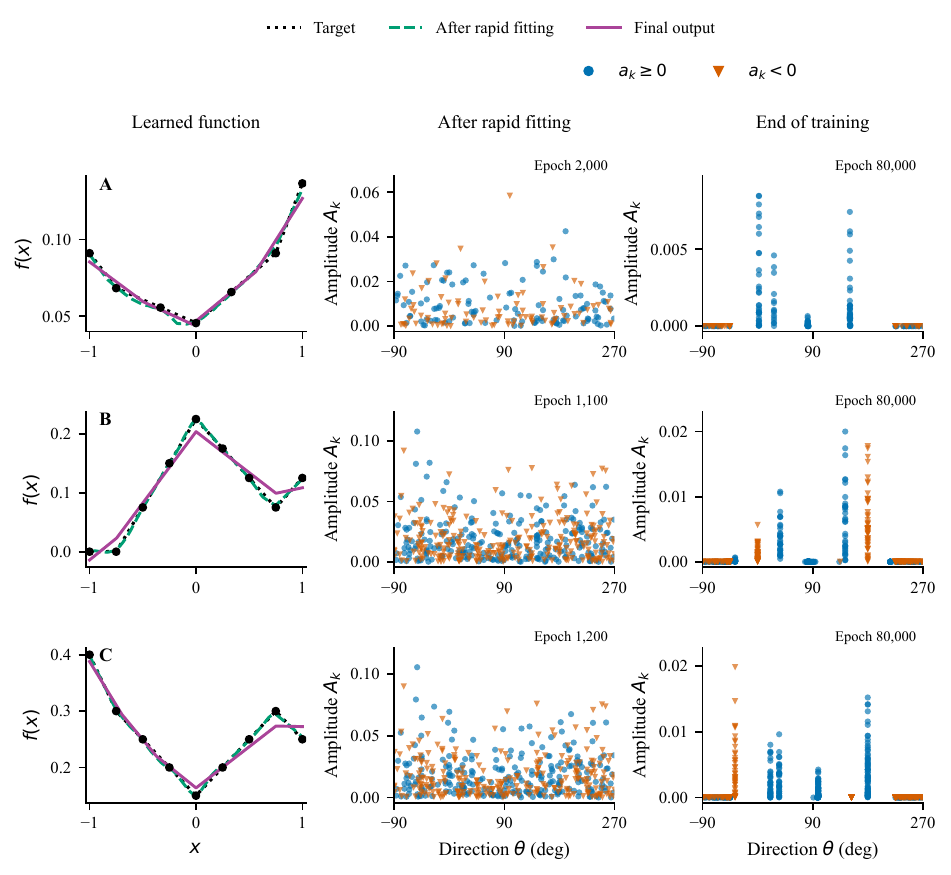}
 \caption{Fitting precedes condensation for the targets in
 \eqref{eq:three-knot-target} (rows A--C).
 Left: target, training samples, early output, and final output.
 Middle/right: neurons after rapid fitting/at epoch 80,000, with
 identical marker sizes and separately scaled amplitude axes.
 Circles and triangles denote nonnegative and negative output weights.}
 \label{fig:multiknot}
\end{figure}

Figure~\ref{fig:multiknot} compares a convex target (A) with targets
whose peak and trough occur in opposite orders (B, C):
\begin{equation}
\begin{aligned}
 f_{\mathrm A}(x)&=0.06\relu(x+3/4)+0.09 \relu(-x)+0.12\relu(x-3/4),\\
 f_{\mathrm B}(x)&=0.3\relu(x+3/4)-0.5\relu(x)+0.4\relu(x-3/4),\\
 f_{\mathrm C}(x)&=0.2\relu(x+3/4)+0.4\relu(-x)-0.4\relu(x-3/4),
\end{aligned}
 \label{eq:three-knot-target}
\end{equation}
where $x\in[-1,1]$.
After rapid fitting, the outputs already track the targets while
neuron directions remain diffuse. Later, neurons with appreciable
amplitudes form narrow angular groups in all three cases.
The final outputs show a regularization bias.

\clearpage
\section{Transformer experiment on anchor function composition}
\label{app:transformer}

This section specifies the experiment in
Figure~\ref{fig:transformer_condensation}. We use the anchor function
framework of \citet{zhang2024initialization}, with the data split and
training settings described below. The comparison consists of two runs
with weight decay coefficients 0 and 0.8, identical saved data, and
identical initial parameters throughout the network.

\subsection{Anchor functions and data construction}
Each input is a sequence of nine integer tokens containing exactly one
consecutive anchor pair $(a,b)\in\{1,2,3,4\}^2$. The token immediately
preceding this pair is the key $x$. The four elementary functions are
\[
 g_1(x)=x+5,\qquad g_2(x)=x+1,\qquad
 g_3(x)=x-2,\qquad g_4(x)=x-8.
\]
The target is the integer $y=g_a(g_b(x))=x+d_a+d_b$, where
$(d_1,d_2,d_3,d_4)=(5,1,-2,-8)$. These translations commute, so their
order does not affect the target. All pairs use this compositional
rule, including $(3,4)$ and $(4,3)$, whose target is $x-10$.

Using zero-based positions, we sample the key position
$p\in\{0,\ldots,6\}$ uniformly and place $(x,a,b)$ at positions
$(p,p+1,p+2)$. The other six input tokens are sampled independently
and uniformly from $\{20,\ldots,99\}$ and do not affect the target.
The key is drawn uniformly from the same integer range, subject to
\[
 x\bmod 9\ne p\quad\text{for training},\qquad
 x\bmod 9=p\quad\text{for testing}.
\]
The modulus is the input length, 9, as implemented in this experiment.
Training contains the 14 anchor pairs other than $(3,4)$ and $(4,3)$;
both excluded pairs are held out entirely. Testing contains all 16
pairs. Thus the test groups for the 14 seen pairs assess generalization
to held-out key--position combinations, while the other two groups
also assess unseen anchor compositions. No non-compositional target
is introduced in this experiment.

\subsection{Network and initialization}

The model is a decoder-only Transformer with two layers and one causal
self-attention head per layer. Given an input sequence represented by
one-hot vectors $X^{\mathrm{in}}$, the input to the Transformer
block is
\begin{equation}
    H^{(0)}
    =
    X^{\mathrm{in}}W^{\mathrm{em}} + X^{\mathrm{pos}},
\end{equation}
where $X^{\mathrm{pos}}$ is a learned positional embedding. For layer $l$, we compute:
\begin{equation}
\begin{aligned}
    \widetilde H^{(l)}
        &= \operatorname{LN}
           \left(H^{(l-1)}\right),\\
    Q^{(l)}
        &= \widetilde H^{(l)}W_Q^{(l)},\\
    K^{(l)}
        &= \widetilde H^{(l)}W_K^{(l)},\\
    V^{(l)}
        &= \widetilde H^{(l)}W_V^{(l)}.
\end{aligned}
\end{equation}
The causal attention matrix and the output of the attention layer are
\begin{equation}
\begin{aligned}
    Attn^{(l)}
    &=
    \operatorname{softmax}\left(
        \frac{Q^{(l)}K^{(l)\top}}{\sqrt{d_k}}
    \right)(\text{with mask}),\\
    U^{(l)}
    &=
    H^{(l-1)}
    +
    Attn^{(l)}V^{(l)}W_O^{(l)},
\end{aligned}
\end{equation}
The feedforward layer is computed as:
\begin{equation}
\begin{aligned}
    \widetilde U^{(l)} &= \operatorname{LN}^{(l)} \left( U^{(l)} \right) ,\\
    H^{(l)}
    &= \widetilde U^{(l)}+\operatorname{ReLU}\!\left(
        \widetilde U^{(l)}W_1^{(l)}
    \right)W_2^{(l)}.
\end{aligned}
\end{equation}
The final logits are
\begin{equation}
    Z = H^{(N)}W^{\mathrm{out}}+b^{\mathrm{out}}.
\end{equation}
LN refers to Layer Normalization.

Each linear weight matrix
$W\in\mathbb{R}^{d_{\mathrm{in}}\times d_{\mathrm{out}}}$ is initialized
entrywise as
\begin{equation}
    W_{ij}
    \sim
    \mathcal N\!\left(0,d_{\mathrm{in}}^{-2\gamma}\right)
\end{equation}

\begin{table}[htbp]
 \centering\small
 \caption{Settings shared by the two Transformer runs in
 Figure~\ref{fig:transformer_condensation}; only weight decay differs.}
 \label{tab:transformer-settings}
 \begin{tabular}{ll}
 \toprule
 Setting & Value\\
 \midrule
 Layers / heads per layer & 2 / 1\\
 Embedding / feedforward dimension & 400 / 1,200\\
 Query, key, and value dimension & 64 \\
 Input length / vocabulary size & 9 / 120\\
 Initialization rate & 0.48 \\
 Optimizer & AdamW \\
 Batch size / training epochs & 4,096 / 400\\
 Learning rate: initial / peak / minimum & $5\times10^{-5}$ / $5\times10^{-4}$ / $5\times10^{-5}$\\
 Warmup / cosine schedule & 40 / 360 epochs\\
 Global gradient-norm clipping & 1.0\\
 \bottomrule
 \end{tabular}
\end{table}

\subsection{Training objective and weight decay comparison}
Training minimizes cross entropy between the logits at the last output position and the appended
target $y$. AdamW applies decoupled weight decay to \emph{all} trainable parameters,
including embeddings, LayerNorm parameters, and the output bias;
there are no parameter groups exempted from decay. The learning rate
warms up from $5\times10^{-5}$ to $5\times10^{-4}$ over 40 epochs,
then follows a cosine schedule with period parameter 360 epochs and
minimum $5\times10^{-5}$. 

\subsection{Cosine similarities}

\begin{figure}[htbp]
    \centering
    \includegraphics[width=\textwidth]{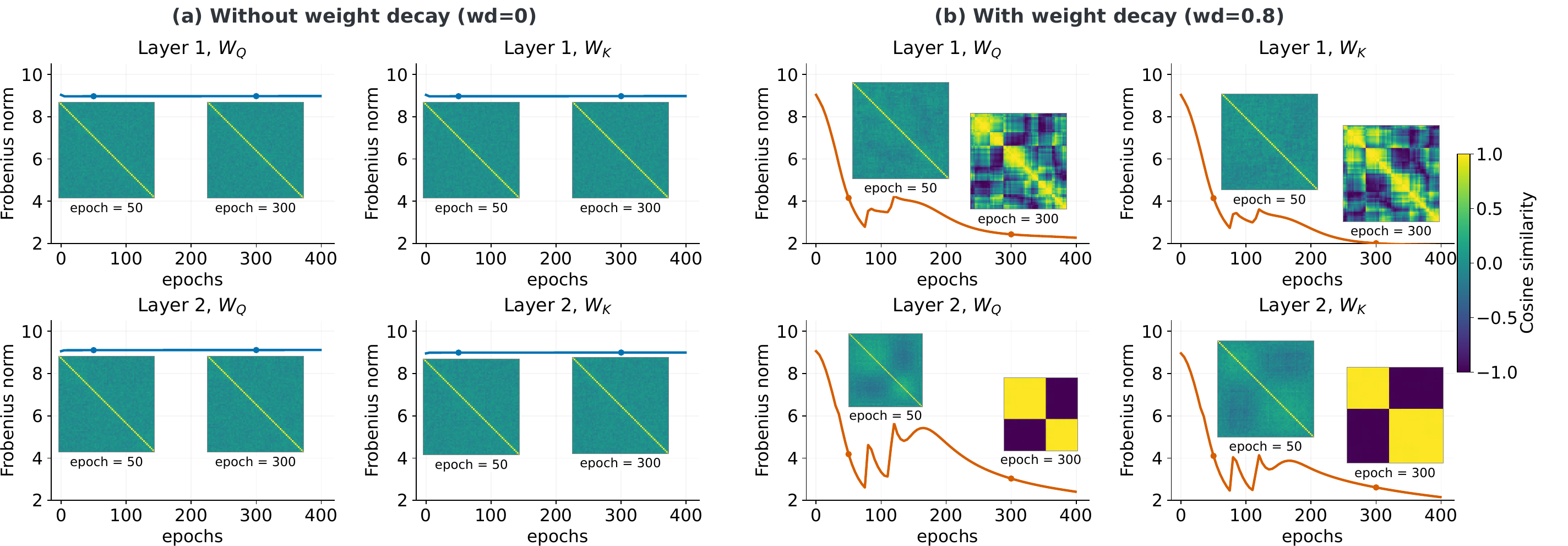}
    \caption{Weight contraction and condensation across Transformer layers.
    The top and bottom rows show layers 1 and 2, respectively.
    Panel (a) corresponds to training without weight decay
    ($\lambda=0$), and Panel (b) corresponds to training with
    weight decay ($\lambda=0.8$). Within each panel, the columns show
    $W_Q$ and $W_K$. Curves show Frobenius norms during training, while
    the insets show row-wise cosine similarities at epochs 50 and 300.
    Weight decay induces norm contraction and structured directional
    alignment in both layers.}
    \label{fig:transformer-condensation-layers}
\end{figure}

In our experiment, $W_Q,W_K\in\mathbb R^{64\times400}$. For rows ${\bm{w}}_i,{\bm{w}}_j\in\mathbb R^{400}$, define cosine similarity
\[
 C_{ij}=\frac{{\bm{w}}_i^\top {\bm{w}}_j}{\|{\bm{w}}_i\|_2\,\|{\bm{w}}_j\|_2}.
\]
Rows and columns are reordered by the same permutation and grouped according to cosine similarity.
Figure~\ref{fig:transformer_condensation} uses the query and key projection matrices from the first Transformer blocks. 
Figure~\ref{fig:transformer-condensation-layers} shows result on both Transformer layers.

\end{document}